\documentclass[letterpaper]{article} % DO NOT CHANGE THIS
\usepackage[preprint]{aaai2027}  % DO NOT CHANGE THIS
\usepackage[hyphens]{url}  % DO NOT CHANGE THIS
\usepackage{graphicx} % DO NOT CHANGE THIS
\usepackage{natbib}  % DO NOT CHANGE THIS AND DO NOT ADD ANY OPTIONS TO IT
\usepackage{caption} % DO NOT CHANGE THIS AND DO NOT ADD ANY OPTIONS TO IT
\usepackage{booktabs}
\usepackage{amsmath,amssymb,amsfonts}
\usepackage{amsthm}
\usepackage{multirow}
\usepackage{adjustbox}
\usepackage{subcaption}
\usepackage{algorithm}
\usepackage{algpseudocode}
\usepackage{placeins}
\usepackage{needspace}

\theoremstyle{plain}

\newtheorem{proposition}{Proposition}
\newtheorem{remark}{Remark}

\title{Deciding When to Switch: E-Processes for Adaptive Minimax Training for Generative Adversarial Nets}

\author{
Hyunjoo Kim,\textsuperscript{\rm 1}
Sicheng Wu,\textsuperscript{\rm 2}
Agastya Venkatraman,\textsuperscript{\rm 3}
Guang Lin,\textsuperscript{\rm 2}
Sehwan Kim\textsuperscript{\rm 1}
}

\affiliations{
\textsuperscript{\rm 1}Department of Statistics, Ewha Womans University\\
\textsuperscript{\rm 2}Department of Mathematics, Purdue University\\
\textsuperscript{\rm 3}Department of Computer Science, Purdue University\\
hjkim0320@ewha.ac.kr,
wu2404@purdue.edu,
venkatr9@purdue.edu,\\
guanglin@purdue.edu,
sehwankim@ewha.ac.kr
}

\begin{document}

\maketitle

\begin{abstract}
Modern data science increasingly gives rise to hypothesis-testing problems that are not naturally formulated in terms of parameters within prespecified statistical models. One important example is the dynamic evaluation of optimization algorithms, where decisions must be made during training about whether further updates remain beneficial or the algorithm should switch to a different phase. This issue is particularly relevant in stochastic min--max optimization. Generative adversarial networks (GANs) provide a canonical example, as their training requires repeated decisions about when to switch between discriminator and generator updates, yet existing methods typically rely on fixed update ratios or heuristic criteria. We formulate this switching problem as sequential hypothesis testing and develop an e-process-based adaptive training procedure. During discriminator updates, one e-process tests the null that the discriminator-induced separation between the empirical data distribution and the generator law remains below a target level. During generator updates, with the discriminator fixed, a second e-process tests the reverse null that this separation remains above a refresh level. Conditional on the observed training sample, we prove that fresh empirical indices and latent draws yield conditional e-values that can be accumulated into e-processes, providing anytime-valid Type I error control under adaptive model updates and data-dependent switching. Across multimodal synthetic distributions and image benchmark datasets, the proposed method matches or outperforms the best fixed-ratio baselines under several widely used GAN objectives.

\end{abstract}

\section{Introduction}
\label{sec:introduction}

Statistical hypothesis testing has been a basic tool for drawing conclusions from data, enabling principled decisions across a wide range of disciplines. Classical testing has predominantly focused on hypotheses concerning parameters or distributions within pre-specified statistical models. However, many recent applications raise testing problems that are not naturally expressed as hypotheses about parameters in a statistical model. For example, one may wish to test whether a given text was generated by a large language model rather than written by a human~\citep{Varshney2020LimitsOD,Li_2025,zhou2026detectingllmgeneratedtextperformance}. Another example arises in differential privacy, where privacy guarantees can be formulated in terms of the difficulty of distinguishing between outputs generated from neighboring datasets~\citep{kairouz2015composition,dong2022gdp,su2025statistical}.

Beyond these emerging questions, there is also a need for principled methods to guide decisions made during the training of modern machine-learning systems. One important setting is min--max optimization, which arises in generative adversarial networks (GANs), adversarially robust learning, and reinforcement learning, including zero-sum Markov games and minimax methods for off-policy evaluation~\citep{goodfellow2014gan,madry2018towards,littman1994markov,uehara2020minimax}. In these problems, two competing components are optimized against each other, and the progress of one component depends on the current state of the other.

Among these applications, generative adversarial networks provide a particularly important and challenging example. Although GANs have played a foundational role in the development of deep generative modeling, their training is notoriously unstable because the generator and discriminator evolve simultaneously and continually alter each other's optimization objectives. These coupled dynamics often make GAN training difficult, leading to problems such as vanishing gradients and mode collapse.

A large body of work has therefore sought to stabilize GAN training by redesigning the adversarial objective, including $f$-divergence, least-squares, Wasserstein, and hinge losses~\citep{goodfellow2014gan,nowozin2016fgan,mao2017least,arjovsky2017wgan,lim2017geometric}. Other approaches regularize or constrain the discriminator through techniques such as gradient penalties and spectral normalization, with theoretical analyses showing that suitable gradient regularization can improve the local convergence properties of GAN training~\citep{gulrajani2017improved,miyato2018spectral,mescheder2018training}.

A separate practical issue is how to allocate updates between the discriminator and generator. In most GAN algorithms, the update schedule is fixed before training begins. The original GAN experiments largely use a one-to-one schedule, whereas Wasserstein GAN (WGAN) and spectral normalization GAN (SNGAN) typically perform several critic or discriminator updates for each generator update. 
A few approaches have attempted to better balance the training dynamics. One approach assigns different learning rates to the two networks~\citep{heusel2017ttur}, while \citet{ouyang2021accelerated} selects which network to update based on changes in the observed losses. The first approach still follows a prespecified update schedule. The second was developed specifically for the WGAN setting, making its extension to other GAN objectives unclear. What is still missing is a general rule for deciding when to stop updating one network and switch to the other. We address this problem by developing a statistically principled, data-adaptive scheduling rule that applies to a broad class of GAN objectives

In this work, we formulate the update-scheduling problem in GAN training as a sequential hypothesis-testing problem. At each stage of training, the algorithm decides whether the discriminator or generator currently being updated has made sufficient progress or whether further updates are still needed. Unlike classical testing problems, the hypotheses are examined repeatedly along an optimization trajectory, and the decision to continue updating the current network or switch to the other depends on the evidence accumulated during training.

We use e-values and e-processes to determine whether the algorithm should continue updating the current network or switch to the other. An e-value is a nonnegative measure of evidence against the null hypothesis whose expectation under the null is at most one~\citep{vovk2021evalues}. An e-process extends this idea to sequential settings and supports anytime-valid testing under continuous monitoring and data-dependent stopping~\citep{ramdas2023gametheoretic,grunwald2024safe}. We construct e-processes to decide whether the current network should continue to be updated. Once the accumulated evidence supports ending the current discriminator or generator phase, the algorithm switches to the other network. The resulting procedure replaces a prespecified update schedule with a statistically principled, data-adaptive scheduling rule.

\section{Preliminary}
\label{sec:preliminary}

\paragraph{E-value}

Let $\mathcal{P}$ denote a class of distributions representing the null hypothesis. A nonnegative random variable $E\geq 0$ is called an \emph{e-variable} for $\mathcal{P}$ if
\[
\mathbb{E}_{P}[E] \leq 1
\qquad
\text{for every } P\in\mathcal{P},
\]
and it is called \emph{exact} if equality holds for every $P\in\mathcal{P}$. The realized value of an e-variable is referred to as an \emph{e-value}~\citep{vovk2021evalues}.

E-values admit a direct interpretation as measures of evidence against the null hypothesis. By Markov's inequality, for every $P\in\mathcal{P}$ and $\alpha\in(0,1]$, 
\[P\left(E\geq \frac{1}{\alpha}\right)
\leq \alpha.\]
Thus, an e-value exceeding $1/\alpha$ provides level-$\alpha$ evidence against the null hypothesis.

Validity alone does not ensure that an e-value provides meaningful evidence against the null. For example, the constant $E\equiv 1$ is exact for any null distribution, and any constant $E\equiv c$ with $0\leq c\leq 1$ is also valid. However, these e-values take the same value under both the null and alternative hypotheses and therefore provide no evidence for distinguishing between them. A useful e-value should instead tend to take large values under the alternative while maintaining the required expectation bound under the null.

As an example, consider the simple hypothesis test
\begin{equation}\label{HP:simple_simple}
    H_0: X\sim P
\qquad\text{versus}\qquad
H_1: X\sim Q,
\end{equation}
a canonical e-variable is the likelihood ratio $E(X)=\frac{dQ(X)}{dP(X)},$ assuming that $Q$ is absolutely continuous with respect to $P$. Indeed, the likelihood ratio is an exact e-variable, and it becomes large when the observation is more likely under $Q$ than under $P$. E-values can be merged in several ways. Suppose that \(E_1,\ldots,E_m\) are e-variables for the same null model \(\mathcal P\). For any nonnegative deterministic weights \(w_1,\ldots,w_m\) satisfying \(\sum_{j=1}^m w_j=1\), the weighted average 
\[
\overline E=
\sum_{j=1}^m w_jE_j
\]
is again an e-variable. If \(E_1,\ldots,E_m\) are further independent under every \(P\in\mathcal P\), then
\[
E_{\mathrm{prod}}=
\prod_{j=1}^m E_j
\]
is an e-variable.

\paragraph{E-process}

An e-process extends the concept of an e-value to a sequential setting in which observations and evidence are revealed over time. Let $(\mathcal{F}_t)_{t\geq 0}$ denote the filtration representing the information available during the sequential experiment. A nonnegative adapted process $(M_t)_{t\geq 0}$ with $M_0=1$ is called an \emph{e-process} for $\mathcal{P}$ if
\[
\mathbb{E}_{P}[M_{\tau}] \leq 1
\]
for every $P\in\mathcal{P}$ and every stopping time $\tau$.

A common way to construct an e-process is through sequential multiplication. Suppose that, at each time $t$, a nonnegative random variable $E_t$ is chosen using the past information $\mathcal{F}_{t-1}$ and satisfies
\[
\mathbb{E}_{P}
\left[
E_t\mid\mathcal{F}_{t-1}
\right]
\leq 1
\qquad
\text{for every } P\in\mathcal{P}.
\]
Such an $E_t$ is called a conditional e-variable. The product process
\[
M_t=\prod_{s=1}^{t}E_s,
\qquad M_0=1,
\]
is then a nonnegative supermartingale under every $P\in\mathcal{P}$ and therefore forms an e-process. 

A key property of an e-process is its anytime-validity. By Ville's inequality, for every \(\alpha\in(0,1]\),
\[
P
\left(
\sup_{t\geq 1}M_t
\geq \frac{1}{\alpha}
\right)
\leq \alpha
\qquad
\text{for every } P\in\mathcal{P}.
\]
Consequently, the stopping rule $\tau_\alpha
=
\inf\left\{
t\geq 1:
M_t\geq \frac{1}{\alpha}
\right\}$ yields a level-\(\alpha\) sequential test that rejects the null hypothesis $P\in \mathcal P$. For a comprehensive review of e-values and e-processes, see \citet{ramdas2025evalues}.

\paragraph{Conformal e-prediction}

As illustrated by the density-ratio example, the likelihood ratio provides a canonical construction of an informative e-value. In many machine-learning applications, however, neither hypothesis is specified by an explicit probabilistic model, so the corresponding likelihood ratio cannot be evaluated directly. One may instead estimate a score that approximates the density ratio using flexible machine-learning methods, such as deep neural networks~\citep{tomaselli2025RSBI}. Approximation error, optimization error, regularization, and finite-sample generalization error may cause its expectation under the null to exceed one.

Conformal e-prediction offers a way to calibrate a fitted score into a valid e-value under exchangeability. Consider the testing problem in \eqref{HP:simple_simple}. Let \(X_1,\ldots,X_m\) be calibration observations drawn from the null distribution \(P\), and let \(X_{m+1}\) be a new observation to be tested. Under \(H_0\), the augmented sample \(X_1,\ldots,X_m,X_{m+1}\) is exchangeable. Let \(A:\mathcal X\to[0,\infty)\) be a nonnegative evidence score learned from an independent training sample, with larger values indicating stronger evidence in favor of \(Q\) over \(P\). The score \(A\) may approximate the density ratio \(dQ/dP\), but it need not be a density ratio.

The conformal e-value is then defined as
\[
E_{\mathrm{conf}}(X_{m+1})
:=
\frac{(m+1)A(X_{m+1})}
{\sum_{j=1}^{m+1}A(X_j)}.
\]
If \(\sum_{j=1}^{m+1}A(X_j)=0\), we define $E_{\mathrm{conf}}(X_{m+1}):=1.$ Under \(H_0\), exchangeability implies that the \(m+1\) terms on the left-hand side have the same expectation. Consequently,
\[
\mathbb E_{P}
\left[
E_{\mathrm{conf}}(X_{m+1})
\right]
=1,
\]
so \(E_{\mathrm{conf}}(X_{m+1})\) is a valid e-value. Thus, conformal normalization restores finite-sample validity even when the underlying evidence score is only an approximate or machine-learned density-ratio surrogate.

\begin{remark}
    
 Although any nonnegative score yields a valid conformal e-value, its power depends strongly on the choice of score. Suppose that the calibration observations are drawn from the null distribution \(P\), while the test observation is drawn from the alternative distribution \(Q\). For a large calibration sample,
\[
E_{\mathrm{conf}}(X_{m+1})
\approx
\frac{A(X_{m+1})}{\mathbb E_P[A(X)]}. 
\]
Then, the expected log-evidence under \(Q\) is approximately
\[
\mathbb E_Q[\log E_{\mathrm{conf}}(X)]
\approx
\mathbb E_Q[\log A(X)]
-
\log \mathbb E_P[A(X)].
\]
Provided that \(Q\ll P\), this criterion satisfies
\[
\mathbb E_Q[\log A(X)]
-
\log \mathbb E_P[A(X)]
\leq
D_{\mathrm{KL}}(Q\|P),
\]
with equality when $A(x)\propto\frac{dQ(x)}{dP(x)}.$ Thus, conformal calibration guarantees finite-sample validity under exchangeability, while density-ratio estimation remains a natural principle for constructing powerful evidence scores.

\end{remark}

\section{Representations of Distributional discrepancies}

Let $P$ and $Q$ be two probability distributions on $\mathcal X$. Many adversarial learning methods compare $P$ and $Q$ by optimizing over a class of score functions that distinguish between the two distributions. The resulting optimization problem has a variational representation of  distributional discrepancy. In general, we consider a representation of the form
\begin{equation}
\label{eq:general_variational}
\mathcal D(P,Q)
=
\sup_{s\in\mathcal S}
\left\{
    \mathbb E_{P}\!\left[\phi\{s(X)\}\right]
    -
    \mathbb E_{Q}\!\left[\psi\{s(Y)\}\right]
\right\},
\end{equation}
where \(s:\mathcal X\to\mathbb R\) is a score function, \(\mathcal S\) is a prescribed class of admissible score functions, and $\phi$ and $\psi$ determine how the score is evaluated under $P$ and $Q$, respectively.

Two important classes of distributional discrepancies that admit such variational representations are \(f\)-divergences and integral probability metrics.

\paragraph{\(f\)-divergences}
Let \(f:(0,\infty)\to\mathbb R\) be a convex function satisfying \(f(1)=0\). Assuming \(P\ll Q\), the corresponding \(f\)-divergence is defined as
\begin{equation*}
\label{eq:fdiv_primal}
D_f(P\|Q)
=
\mathbb E_{Y\sim Q}
\left[
    f\left(\frac{dP}{dQ}(Y)\right)
\right].
\end{equation*}
Let \(f^*\) denote the Fenchel conjugate of \(f\), defined by $f^*(t)
=
\sup_{u>0}\{ut-f(u)\}.$ Under suitable regularity and integrability conditions,
\(D_f(P\|Q)\) admits the variational representation
\begin{equation}
\label{eq:fdiv_variational}
D_f(P\|Q)
=
\sup_{s\in\mathcal S_f}
\left\{
    \mathbb E_{ P}[s(X)]
    -
    \mathbb E_{ Q}\!\left[f^*\{s(Y)\}\right]
\right\},
\end{equation}
where \(\mathcal S_f\) is a class of measurable score functions whose ranges lie in \(\operatorname{dom}(f^*)\).

\paragraph{Integral probability metrics}

An integral probability metric (IPM) associated with a function class \(\mathcal S\) is defined by
\begin{equation}
\label{eq:ipm}
D_{\mathcal S}(P,Q)
=
\sup_{s\in\mathcal S}
\left|
    \mathbb E_{P}[s(X)]
    -
    \mathbb E_{ Q}[s(Y)]
\right|.
\end{equation}
If \(\mathcal S\) is symmetric, in the sense that \(s\in\mathcal S\) implies \(-s\in\mathcal S\), the absolute value in \eqref{eq:ipm} can be omitted. In contrast to the \(f\)-divergence
representation, the same score function appears in both expectations, while the choice of \(\mathcal S\) determines the resulting discrepancy. For example, taking \(\mathcal S_{\mathrm{TV}}
=
\{s:\|s\|_\infty\leq 1/2\}\) yields the total variation distance, while taking \(\mathcal S\) to be the class of \(1\)-Lipschitz functions yields the Wasserstein-1 distance.

In practice, \(P\) and \(Q\) are unknown and observed only through samples. Replacing the population expectations in \eqref{eq:general_variational} with empirical averages and optimizing yields an estimated score \(\widehat{s}\). For \(f\)-divergences, the population-optimal score satisfies $
s^*(x)=f'\left(\frac{dP}{dQ}(x)\right),$ under suitable regularity conditions, so \(\widehat{s}\) estimates a known transformation of the density ratio. In contrast, IPM-optimal scores generally lack such a representation and are identifiable only up to an additive constant. Nevertheless, in both settings, the fitted score typically assigns larger values to observations that are more likely under \(P\) than under \(Q\), and smaller values to those more likely under \(Q\).

Based on these observations about the fitted score, the next section develops e-processes for adaptively determining the discriminator and generator update schedules in GAN training.

\section{E-Process-Guided Adaptive GAN Training}

Let \(P\) denote the unknown data distribution on \(\mathcal X\), from which real observations \(X\sim P\) are drawn. Let \(D_{\theta_d}:\mathcal X\to\mathbb R\) be a neural network parameterized by \(\theta_d\), and let \(G_{\theta_g}:\mathcal U\to\mathcal X\) be a generator parameterized by \(\theta_g\), where \(\mathcal U\) denotes an auxiliary noise space. Given a reference distribution \(P_U\) on \(\mathcal U\), the generator induces the distribution $Q_{\theta_g}
=
(G_{\theta_g})_{\#}P_U$ of \(G_{\theta_g}(U)\), where \(U\sim P_U\).

To make \(Q_{\theta_g}\) close to the data distribution \(P\), we consider an adversarial game between a discriminator and a generator, with the following objectives:
\[
\begin{split}
\min_{\theta_d}
\mathcal L_D(\theta_d;\theta_g)
&=
\mathbb E_{P}
\left[
\phi(D_{\theta_d}(X))
\right]
+
\mathbb E_{Q_{\theta_g}}
\left[
\psi_1(D_{\theta_d}(Y))
\right],\\
\min_{\theta_g}
\mathcal L_G(\theta_g;\theta_d)
&=
\mathbb E_{ Q_{\theta_g}}
\left[
\psi_2(D_{\theta_d}(Y))
\right].
\end{split}
\]
Here, the functions \(\phi\), \(\psi_1\), and \(\psi_2\) are determined by the chosen distributional discrepancy. Under its variational representation, the discriminator is trained to distinguish \(P\) from \(Q_{\theta_g}\), while the generator is updated to make the two distributions less distinguishable. The pair \((D_{\theta_d},G_{\theta_g})\) is typically trained through alternating stochastic-gradient updates according to a prespecified schedule. We instead use sequential evidence to adaptively determine the number of discriminator and generator updates.

\subsection{Adaptive discriminator updates}
\label{sec:gen_framework}

The discriminator should be strong enough to provide useful signals to the generator, but it should not be trained until it becomes saturated. If the discriminator separates real and generated samples too well, then its output may become almost deterministic, leading to weak or unstable gradients for the generator. To avoid this behavior, we use an e-process to determine whether the discriminator has already learned sufficient separation between the real and generated samples. Once the accumulated evidence is sufficiently large, we stop updating the discriminator and switch to updating the generator.

We first describe the population-level motivation. Let $P$ denote the target real distribution and let $Q$ denote the current generator distribution. We consider the composite null hypothesis
\begin{equation}\label{HP:Discriminator}
        H_0^D(a_D):\quad \mathrm{TV}(P,Q)\le a_D,
\end{equation}
where $TV(\cdot , \cdot)$ is the total variation between two distributions. The null states that the current generator distribution is sufficiently close to the target distribution.

And we use the discriminator as a witness function that provides evidence against the null. The following proposition presents two constructions of an e-variable based on an arbitrary discriminator \(D:\mathcal X\to\mathbb R\).

\begin{proposition}
\label{prop:discriminator-evalue}
Suppose that \(X\sim P\) and \(Y\sim Q\) are independently drawn and that \(D\) is fixed before observing \((X,Y)\). Under \(H_0^D(a_D)\), each of the following is an e-variable:
\begin{enumerate}
    \item[(i)] The separable-score e-variable
    \[
    E_D^{\mathrm{sep}}(X,Y)
    =
    \frac{
        4\sigma\{D(X)\}\sigma\{-D(Y)\}
    }{
        (1+a_D)^2
    }.
    \]

    \item[(ii)] The difference-score e-variable
    \[
    E_D^{\mathrm{diff}}(X,Y)
    =
    \frac{
        2\sigma\{D(X)-D(Y)\}
    }{
        1+2a_D-a_D^2
    }.
    \]
    Here $\sigma(\cdot)$ denotes sigmoid function. 
\end{enumerate}
\end{proposition}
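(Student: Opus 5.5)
For both parts the plan is to bound the expectation of the numerator under an arbitrary pair $(P,Q)$ with $t:=\mathrm{TV}(P,Q)\le a_D$. The only facts used are that $\sigma$ takes values in $(0,1)$, that $\sigma(-u)=1-\sigma(u)$, and that $D$ is fixed independently of $(X,Y)$. Nonnegativity of both variables is immediate, so everything reduces to showing the expectation is at most the stated denominator.

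\emph{Part (i).} Set $f=\sigma\circ D$, so $0<f<1$. By independence,
\[
\mathbb E[4\sigma\{D(X)\}\sigma\{-D(Y)\}]=4\,p(1-q),
\qquad p=\mathbb E_P f,\quad q=\mathbb E_Q f .
\]
Because $f$ takes values in $[0,1]$, the variational form of total variation gives $|p-q|\le t\le a_D$. Hence $1-q\le 1-p+a_D$. This yields $p(1-q)\le p(1+a_D-p)$, and maximizing the right-hand side over $p$ gives at most $\{(1+a_D)/2\}^2$. Multiplying by $4$ and dividing by $(1+a_D)^2$ gives expectation at most one.

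\emph{Part (ii).} Set $g(x,y)=\sigma\{D(x)-D(y)\}\in[0,1]$. The key identity is antisymmetry, $g(x,y)+g(y,x)=1$. Consequently, for any distribution $R$ and $Y,Y'$ i.i.d.\ from $R$, we have $\mathbb E\,g(Y,Y')=1/2$.

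The step needing care is getting the exact denominator $1+2a_D-a_D^2$. The naive bound $\mathrm{TV}(P\otimes Q,Q\otimes Q)\le t$ only gives $1/2+t$, which is too weak. So I would use the overlap decomposition instead. For $t<1$, write
\[
P=(1-t)R+tP', \qquad Q=(1-t)R+tQ',
\]
with $R=\min(P,Q)/(1-t)$ and $P',Q'$ probability measures. If $t=1$ the claim is trivial, since $g\le1$ and the bound below equals one.

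Expanding $P\otimes Q$ gives four components:
\begin{itemize}
\item the $R\otimes R$ component, with weight $(1-t)^2$, contributes exactly $1/2$;
\item the cross components $R\otimes Q'$ and $P'\otimes R$, with total weight $2t(1-t)$, contribute at most $1$ each;
\item the $P'\otimes Q'$ component, with weight $t^2$, contributes at most $1$.
\end{itemize}
Summing,
\[
\mathbb E\, g(X,Y)\le \tfrac12(1-t)^2+2t(1-t)+t^2=\tfrac12(1+2t-t^2).
\]
Finally, $u\mapsto 1+2u-u^2$ is nondecreasing on $[0,1]$. Hence this is at most $\tfrac12(1+2a_D-a_D^2)$, which gives expectation at most one after normalization.

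The main obstacle is this sharper decomposition in (ii). One has to recognise that independence rules out a maximal coupling of $X$ and $Y$, so the product-measure overlap $(1-t)^2$ must be used instead. The monotonicity step, passing from $t$ to $a_D$, is also needed in both parts.
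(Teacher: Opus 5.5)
Your proof is correct. Part (i) is the paper's argument in different form. The paper sets $u=\mathbb E_P[\sigma\{D(X)\}]$ and $v=\mathbb E_Q[1-\sigma\{D(Y)\}]$, notes $u+v\le 1+a_D$, and applies $4uv\le (u+v)^2$; your maximization of $p(1+a_D-p)$ does the same thing.

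Part (ii) takes a genuinely different route.
\begin{itemize}
\item \textbf{The paper's route.} It first uses $\sigma(z)+\sigma(-z)=1$ to write $2\mathbb E_{P\times Q}[g]-1=\mathbb E_{P\times Q}[g]-\mathbb E_{Q\times P}[g]\le \mathrm{TV}(P\times Q,Q\times P)$. It then bounds this product-space distance by $1-(1-\delta)^2$ using two independent maximal couplings of $P$ and $Q$.
\item \textbf{Your route.} You never pass to $\mathrm{TV}(P\times Q,Q\times P)$. You split $P$ and $Q$ into their common part $R$ and residuals, and expand $P\otimes Q$ into four pieces. Exchangeability gives exactly $1/2$ on $R\otimes R$, and you bound the remaining mass $1-(1-t)^2$ crudely by $1$.
\end{itemize}

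The mechanism is the same in both. The two product measures share mass $(1-t)^2$, on which an antisymmetric score is neutral. Your overlap decomposition is simply the mixture form of the paper's pair of independent maximal couplings; only the packaging differs.

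The paper's version is modular. It isolates the inequality $2\mathbb E g-1\le \mathrm{TV}(P\times Q,Q\times P)$, which shows the difference score acting as a classifier between $P\times Q$ and $Q\times P$. This matches the $\operatorname{JS}(P\times Q,Q\times P)$ in Proposition~\ref{prop:oracle-epower}. It also needs no special case at $t=1$, since couplings always exist.

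Yours is a single self-contained computation with no product-space distance. It shows exactly where the slack lies, namely only on the non-overlap components. That makes sharpness easy to read off: for example, when $P'$, $R$, $Q'$ have disjoint supports and $D$ is very large, moderate, and very small on them respectively.

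One small omission: at $t=0$ your residuals $P',Q'$ are undefined. They carry zero weight, so any choice of probability measures works.
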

\begin{proof}[Sketch of proof] For separable case, using the fact
\[
\mathbb E_{P}[\sigma(D(X))]
-
\mathbb E_{Q}[\sigma(D(Y))]
\le
\operatorname{TV}(P,Q)
\le a_D,
\]
the independence of \(X\) and \(Y\), and the inequality
\(4uv\le (u+v)^2\), we obtain
\begin{align*}
&\mathbb E_{P\times Q}
\left[
4\sigma\{D(X)\}\sigma\{-D(Y)\}
\right]
\le (1+a_D)^2.
\end{align*}
The difference case follows by an analogous argument. Further details are provided in the Supplementary Material below.
\end{proof}

Next, to characterize discriminators that yield rapid evidence accumulation, consider the alternative
\begin{equation}
\label{HP:Discriminator-alternative}
H_1^D(b_D):
\quad
\operatorname{TV}(P,Q)\ge b_D,
\end{equation}
where $0\le a_D<b_D\le 1.$ For fixed \(P\) and \(Q\), we measure power by the expected log-growth
\[
\Gamma_{P,Q}(D)
:=
\mathbb E_{P\times Q}\!\left[\log E_D(X,Y)\right],
\]
where a larger value indicates faster evidence accumulation.

\begin{proposition}
\label{prop:oracle-epower}
Suppose that \(P\) and \(Q\) admit strictly positive densities \(p\) and \(q\), respectively, with respect to a common dominating measure. For \(k\in\{\mathrm{sep},\mathrm{diff}\}\), the logarithmic e-power \(\Gamma_{P,Q}^{k}(D)\) is maximized by \(D_k^*(x)=\log\{p(x)/q(x)\}+c_k\), where
\(c_{\mathrm{sep}}=0\) and \(c_{\mathrm{diff}}\in\mathbb R\). Moreover,
\[
\begin{split}
    \sup_D\Gamma_{P,Q}^{\mathrm{sep}}(D)
 &=2\operatorname{JS}(P,Q)-2\log(1+a_D),  \\
 \sup_D\Gamma_{P,Q}^{\mathrm{diff}}(D)
&=\operatorname{JS}(P\times Q,Q\times P)
-\log(1+2a_D-a_D^2),
\end{split}
\]
Consequently, the corresponding oracle e-process has uniformly positive
logarithmic growth over \(H_1^D(b_D)\) whenever
\[
h(b_D)>
\begin{cases}
\log(1+a_D), & k=\mathrm{sep},\\
\log(1+2a_D-a_D^2), & k=\mathrm{diff},
\end{cases}
\]
where $h(t)=\frac12\{(1+t)\log(1+t)+(1-t)\log(1-t)\}.$

\end{proposition}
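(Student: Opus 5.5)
Since the normalizing constants in $E_D^{\mathrm{sep}}$ and $E_D^{\mathrm{diff}}$ do not depend on $D$, I will split each logarithmic e-power into a $D$-dependent term minus a constant. For the separable score,
\[
\Gamma^{\mathrm{sep}}_{P,Q}(D)=\log 4+\mathbb E_P[\log\sigma\{D(X)\}]+\mathbb E_Q[\log\sigma\{-D(Y)\}]-2\log(1+a_D).
\]
The middle two terms are the classical GAN discriminator objective. I will maximize them pointwise in $x$. Writing $u=\sigma(D(x))\in(0,1)$, the integrand is $p(x)\log u+q(x)\log(1-u)$. This is strictly concave in $u$ and is maximized at $u=p/(p+q)$, that is, at $D^*_{\mathrm{sep}}=\log(p/q)$. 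Strict positivity of the densities guarantees that this maximizer is finite. Substituting back gives $\log4+\mathbb E_P\log\frac{p}{p+q}+\mathbb E_Q\log\frac{q}{p+q}=2\operatorname{JS}(P,Q)$, using the paper's normalization where $\operatorname{JS}$ is the average of the two KL divergences to the midpoint $M=(P+Q)/2$. This yields the first identity and the claim $c_{\mathrm{sep}}=0$.

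For the difference score I will work on the product space. Let $Z=(X,Y)$ and $\tilde Z=(Y,X)$, and define $\tilde P=P\times Q$ with density $p(x)q(y)$ and $\tilde Q=Q\times P$ with density $q(x)p(y)$. Put $g(x,y)=D(x)-D(y)$, so that $\sigma(-g)=1-\sigma(g)$ and swapping the arguments flips the sign of $g$. Then
\[
\mathbb E_{\tilde P}[\log\sigma(g)]=\tfrac12\mathbb E_{\tilde P}[\log\sigma(g)]+\tfrac12\mathbb E_{\tilde Q}[\log(1-\sigma(g))].
\]
As a result, $\Gamma^{\mathrm{diff}}=\log2+\tfrac12\{\ldots\}-\log(1+2a_D-a_D^2)$, with the braces containing the same discriminator objective for the pair $(\tilde P,\tilde Q)$. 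I will maximize this over all antisymmetric $g$ instead of only differences $D(x)-D(y)$. The pointwise optimum is $g^*=\log\frac{p(x)q(y)}{q(x)p(y)}$, which is antisymmetric. It is exactly of the form $D(x)-D(y)$ with $D=\log(p/q)+c$ for arbitrary $c$, so the relaxation is tight. The optimal value is $\log2+\tfrac12(2\operatorname{JS}(\tilde P,\tilde Q)-2\log2)=\operatorname{JS}(\tilde P,\tilde Q)$, which gives the second identity. The step I expect to need the most care is this symmetrization together with the tightness of the relaxation. I will also need integrability of the log terms, which I will handle by allowing the value $-\infty$ for bad $D$ and using concavity.

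For the growth claim, I will lower bound the JS terms under $\operatorname{TV}(P,Q)\ge b_D$. The first ingredient is the known tight Vajda-type bound: for the $f$-divergence JS, the minimum over pairs with total variation $t$ is attained by two-point distributions, which gives $\operatorname{JS}(P,Q)\ge h(t)$ in the stated normalization. I will verify this directly on the extremal pair $P=\mathrm{Bern}((1+t)/2)$ and $Q=\mathrm{Bern}((1-t)/2)$, where $\operatorname{JS}=\log2-H_2(1/2)+\ldots=h(t)$, and I will justify minimality through the joint-range (Harremoës--Vajda) argument. Because $h$ is increasing, the sep case gives $2\operatorname{JS}-2\log(1+a_D)\ge 2\{h(b_D)-\log(1+a_D)\}>0$. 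For the diff case, $\operatorname{TV}(P\times Q,Q\times P)\ge\operatorname{TV}(P,Q)$, because marginalizing onto the first coordinate is a contraction for total variation. So $\operatorname{JS}(\tilde P,\tilde Q)\ge h(b_D)$, and the condition stated in the proposition gives a uniformly positive growth rate. The main obstacle is the sharp inequality $\operatorname{JS}\ge h(\operatorname{TV})$. If the joint-range argument becomes messy, I will fall back on data processing onto the two-set partition $\{p>q\}$ versus $\{p\le q\}$, followed by a convexity argument on binary distributions.
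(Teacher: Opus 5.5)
Your proposal is correct and follows the paper's proof step by step. For the separable score you maximize the GAN objective pointwise. For the difference score you use the swap symmetrization to reduce it to the logistic objective for $(P\times Q,\,Q\times P)$. For the growth claim you combine $\operatorname{JS}\ge h(\operatorname{TV})$ with monotonicity of $h$ and the marginalization bound $\operatorname{TV}(P\times Q,Q\times P)\ge\operatorname{TV}(P,Q)$.

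The one difference is that the paper simply cites the $\operatorname{JS}$--$\operatorname{TV}$ bound, whereas you sketch a proof. Your fallback route is complete on its own: data processing onto $\{p>q\}$ preserves $\operatorname{TV}$, and $m\mapsto\operatorname{JS}(\mathrm{Bern}(m+t/2),\mathrm{Bern}(m-t/2))$ is convex and symmetric about $m=1/2$. In your extremal computation, $\log 2-H_2(1/2)$ should read $\log 2-H_2((1+t)/2)$.
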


This result suggests that, for high e-power, the discriminator should approximate the log-density ratio \(\log(p/q)\). This is naturally connected to variational \(f\)-divergence estimation, for which the population-optimal critic is typically a known transformation of \(p/q\). In contrast, an IPM-optimal critic need not approximate the log-density ratio and therefore is not guaranteed to maximize e-power. Nevertheless, the difference-score construction is invariant under the shift \(D\mapsto D+c\), making it compatible with the additive non-identifiability of IPM critics.

We now describe the construction of the e-process for adaptively determining the number of discriminator updates. Let \(D_{\theta_d^{t,\ell}}\) denote the discriminator at the \(\ell\)-th inner iteration of outer iteration \(t\), and let \(G_{\theta_g^t}\) denote the generator, which is held fixed throughout the discriminator-update phase of outer iteration \(t\). Write \(Q_{\theta_g^t}\) for the distribution induced by \(G_{\theta_g^t}\). At outer iteration \(t\) and inner iteration \(\ell\), we draw a fresh evaluation mini-batch, for \(b=1,\ldots,B\),
\[
X_{t,\ell,b}\stackrel{\mathrm{iid}}{\sim}P,
\quad
U_{t,\ell,b}\stackrel{\mathrm{iid}}{\sim}P_U,
\quad
Y_{t,\ell,b}
=
G_{\theta_g^t}(U_{t,\ell,b})
\stackrel{\mathrm{iid}}{\sim}Q_{\theta_g^t}.
\]
For a chosen construction \(k_D\in\{\mathrm{sep},\mathrm{diff}\}\), we then compute
\[
E_{t,\ell,b}^D
=
E_{D_{\theta_d^{t,\ell}}}^{k_D}
\left(X_{t,\ell,b},Y_{t,\ell,b}\right).
\]
Here, \(E_D^{\mathrm{sep}}\) and \(E_D^{\mathrm{diff}}\) denote the two e-variables introduced in Proposition~\ref{prop:discriminator-evalue}. The mini-batch e-value is defined as
\[
E_{t,\ell}^{D,\mathrm{mb}}
=
\prod_{b=1}^B
\left(
\frac{1}{2}
+
\frac{1}{2}E_{t,\ell,b}^D
\right),
\]
and the mini-batch e-values are accumulated within outer iteration \(t\) as
\[
\mathcal E_{t,0}^D=1,
\qquad
\mathcal E_{t,\ell}^D
=
\prod_{j=1}^{\ell}
\left\{
1-\rho_D+\rho_D E_{t,j}^{D,\mathrm{mb}}
\right\}.
\]
Then \(\{\mathcal E_{t,\ell}^D\}_{\ell\geq0}\) is a nonnegative
supermartingale under \(H_{0,t}^D(a_D)\). We define the discriminator
stopping time as
\[
\tau_t^D
=
\inf\left\{
\ell\geq1:
\mathcal E_{t,\ell}^D
\geq\frac{1}{\alpha_D}
\right\}
\wedge L_D,
\]
where \(L_D\) is the maximum number of discriminator updates.

At \(\tau_t^D\), we terminate the discriminator-update phase and switch to the generator-update phase. Thus, the number of discriminator updates is determined adaptively, when sufficient evidence of separation is accumulated. 

\begin{remark}
Population-level validity requires fresh evaluation observations that were not used to train the discriminator or generator. When such observations are unavailable, we condition on the training sample and regard $\widehat P_n=\frac{1}{n}\sum_{i=1}^n\delta_{X_i}$ as fixed. This reflects the usual practice in GAN training of treating the observed empirical distribution as a representative proxy for \(P\).

After \(D_{\theta_d^{t,\ell}}\) and \(G_{\theta_g^t}\) have been fixed, draw \(I\sim\operatorname{Unif}\{1,\ldots,n\}\) and \(U\sim P_U\) independently, and set \(X=X_I\) and \(Y=G_{\theta_g^t}(U)\). Under
\[
H_{0,n,t}^D(a_D):
\quad
\operatorname{TV}\!\left(\widehat P_n,Q_{\theta_g^t}\right)\le a_D,
\]
\(E_{D_{\theta_d^{t,\ell}}}^{k_D}(X,Y)\) is conditionally e-valid given
the training data and past history.

If \(Q_{\theta_g^t}\) is non-atomic, then \(\operatorname{TV}(\widehat P_n,Q_{\theta_g^t})=1\), and hence the null is false for \(a_D<1\). This does not affect validity under the stated empirical null, but its interpretation is relative to the empirical training distribution rather than the population distribution \(P\). We may instead consider a discrepancy induced by the current discriminator. Let define
\[
\begin{split}
    \operatorname{TV}_{D}^{\mathrm{sep}}(P,Q)
&=
\mathbb E_P[\sigma\{D(X)\}]
-
\mathbb E_Q[\sigma\{D(Y)\}] \\
\operatorname{TV}_{D}^{\mathrm{diff}}(P,Q)
&=
\mathbb E_{P\times Q}
\left[
2\sigma\{D(X)-D(Y)\}-1
\right]. 
\end{split}
\]
Accordingly, for \(k_D\in\{\mathrm{sep},\mathrm{diff}\}\), for the
operational null
\[
\widetilde H_{0,n,t}^{D,k_D}(a_D):
\quad
\operatorname{TV}_{D_{\theta_d^{t,\ell}}}^{k_D}
\left(
\widehat P_n,Q_{\theta_g^t}
\right)
\le a_D
\quad
\text{for all } \ell\ge 0.
\]
Under this null, the corresponding construction in
Proposition~\ref{prop:discriminator-evalue} remains conditionally
e-valid, provided that \(D_{\theta_d^{t,\ell}}\) is fixed before the evaluation pair is observed. Consequently, \(\{\mathcal E_{t,\ell}^D\}_{\ell\geq0}\) remains a nonnegative supermartingale under operational nulls. 
\end{remark}

\subsection{Adaptive generator updates}
\label{sec:gen_generator}

At the beginning of the generator-update phase at outer iteration \(t\), set $ \theta_g^{t,0}=\theta_g^t$,  $Q_{t,0}=Q_{\theta_g^{t,0}}$, and fix the discriminator at $\widetilde D_t = D_{\theta_d^{t,\tau_t^D}},$ where \(\widetilde D_t\) has been trained to distinguish \(P\) from \(Q_{t,0}\). At generator iteration \(\ell\), the generator parameter is updated from \(\theta_g^{t,\ell-1}\) to \(\theta_g^{t,\ell}\), inducing the distribution \[ Q_{t,\ell} = Q_{\theta_g^{t,\ell}}. \] As \(Q_{t,\ell}\) changes from its initial distribution \(Q_{t,0}\), the discriminatory power of the fixed score \(\widetilde D_t\) may gradually deteriorate. We naturally seek to detect whether $\operatorname{TV}(P,Q_{t,\ell})<b_G $ at some generator iteration \(\ell\). Accordingly, we consider the global null hypothesis 
\[ H_{\infty,t}^{G}(b_G): \quad \operatorname{TV}(P,Q_{t,\ell})\geq b_G \quad\text{for all }\ell\geq0. 
\]

Evaluating \(\operatorname{TV}(P,Q_{t,\ell})\) at each generator iteration would require optimizing over the class of admissible test functions. During the generator phase, however, \(\widetilde D_t\) is held fixed and no such re-optimization is performed. We therefore use \(\widetilde D_t\) as a fixed witness of the evolving discrepancy. Let \(\widetilde d_t=\sigma\circ\widetilde D_t\) and define
\[
\Delta_{t,\ell}(\widetilde D_t)
=
\mathbb E_P[\widetilde d_t(X)]
-
\mathbb E_{Q_{t,\ell}}[\widetilde d_t(Y)].
\]
We then consider the operational global null
\begin{equation}\label{HP:generator}
    \widetilde H_{\infty,t}^{G}(b_G):
\quad
\Delta_{t,\ell}(\widetilde D_t)\ge b_G
\quad\text{for all }\ell\ge0.
\end{equation}
Since $\Delta_{t,\ell}(\widetilde D_t) \le \operatorname{TV}(P,Q_{t,\ell}),$ the operational null \(\widetilde H_{\infty,t}^{G}(b_G)\) is sufficient
for \(H_{\infty,t}^{G}(b_G)\). Rejection of the operational null may occur either because \(\operatorname{TV}(P,Q_{t,\ell})<b_G\), or because the fixed discriminator \(\widetilde D_t\) no longer captures the discrepancy between \(P\) and \(Q_{t,\ell}\) effectively. In either case, the fixed discriminator no longer provides sufficient evidence that the two distributions remain separated by at least \(b_G\). This provides a principled criterion for terminating the generator-update phase and restarting the discriminator-update phase.

\begin{proposition}
\label{prop:generator-evalue}
For any generator iteration \(\ell\geq0\), suppose that \(X\sim P\) and \(Y\sim Q_{t,\ell}\) are independently drawn, and that \(\widetilde D_t\) is fixed before observing \((X,Y)\). Under the operational null hypothesis \eqref{HP:generator}, the following quantities are e-variables:
\[
\begin{aligned}
E_{\widetilde D_t}^{G,\mathrm{sep}}(X,Y)
&=
\frac{
4\sigma\{\widetilde D_t(Y)\}
\sigma\{-\widetilde D_t(X)\}
}{
(1-b_G)^2
},\\
E_{\widetilde D_t}^{G,\mathrm{diff}}(X,Y)
&=
\frac{
\sigma\{\widetilde D_t(Y)-\widetilde D_t(X)\}
}{
1-b_G
},
\end{aligned}
\]
where \(0\le b_G<1\).
\end{proposition}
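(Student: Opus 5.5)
The plan is to follow the proof of Proposition~\ref{prop:discriminator-evalue}, with the roles of \(X\) and \(Y\) swapped. Both candidates are clearly nonnegative, and the denominators are strictly positive because \(0\le b_G<1\). So it suffices to show
\[
\mathbb E_{P\times Q_{t,\ell}}\!\left[4\sigma\{\widetilde D_t(Y)\}\sigma\{-\widetilde D_t(X)\}\right]\le (1-b_G)^2
\quad\text{and}\quad
\mathbb E_{P\times Q_{t,\ell}}\!\left[\sigma\{\widetilde D_t(Y)-\widetilde D_t(X)\}\right]\le 1-b_G
\]
whenever \(\Delta_{t,\ell}(\widetilde D_t)\ge b_G\). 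Since \(\widetilde D_t\) is fixed before \((X,Y)\) is observed, all expectations are taken with \(\widetilde D_t\) treated as deterministic. The operational null \eqref{HP:generator} is used only at the specific index \(\ell\) at which \(Y\sim Q_{t,\ell}\) is drawn.

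\emph{Separable case.} Write \(\widetilde d_t=\sigma\circ\widetilde D_t\) and set
\[
u=\mathbb E_{Q_{t,\ell}}[\widetilde d_t(Y)],\qquad v=\mathbb E_P[\sigma\{-\widetilde D_t(X)\}]=1-\mathbb E_P[\widetilde d_t(X)].
\]
By independence of \(X\) and \(Y\), the expectation factorizes as \(4uv\). Applying \(4uv\le(u+v)^2\) and noting \(u+v=1-\Delta_{t,\ell}(\widetilde D_t)\) gives
\[
0\le u+v\le 1-b_G
\]
under the null. Hence \(4uv\le(1-b_G)^2\).

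\emph{Difference case.} This part does not factorize, so I will prove a pointwise inequality and then integrate. Put \(s=\sigma\{\widetilde D_t(X)\}\) and \(r=\sigma\{\widetilde D_t(Y)\}\). Then
\[
\sigma\{\widetilde D_t(Y)-\widetilde D_t(X)\}=\frac{A}{A+B},\qquad A=r(1-s),\quad B=s(1-r),
\]
and \(A-B=r-s\). The claim I aim for is
\[
\sigma\{\widetilde D_t(Y)-\widetilde D_t(X)\}\le 1-\bigl(\widetilde d_t(X)-\widetilde d_t(Y)\bigr),
\]
that is, \(A/(A+B)\le 1+A-B\). Multiplying through by \(A+B>0\), this becomes
\[
A\le A+B+A^2-B^2,
\]
which is equivalent to \(B(1-B)+A^2\ge0\). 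This holds because \(B\in[0,1]\). Taking expectations under \(P\times Q_{t,\ell}\) then yields
\[
\mathbb E\bigl[\sigma\{\widetilde D_t(Y)-\widetilde D_t(X)\}\bigr]\le 1-\Delta_{t,\ell}(\widetilde D_t)\le 1-b_G.
\]

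The only nonroutine step is finding the pointwise linear upper bound on the sigmoid of a difference. Once it is written in terms of \(A\) and \(B\), it reduces to the elementary inequality \(B(1-B)+A^2\ge 0\). The rest is bookkeeping: the bound holds for every \(\ell\), so the conditional e-validity used later for the generator e-process follows exactly as in the discriminator phase.
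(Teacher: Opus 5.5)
Your proposal is correct and matches the paper's proof. The separable case uses the same factorization with $u+v=1-\Delta_{t,\ell}(\widetilde D_t)\le 1-b_G$ and $4uv\le(u+v)^2$. Your pointwise bound $\sigma\{\widetilde D_t(Y)-\widetilde D_t(X)\}\le 1+\widetilde d_t(Y)-\widetilde d_t(X)$ is exactly the paper's inequality $\sigma(s-r)\le\sigma(s)+\sigma(-r)$, where the paper takes $r=\widetilde D_t(X)$ and $s=\widetilde D_t(Y)$ as raw scores. The only difference is how that elementary inequality is verified: the paper clears denominators in exponentials, while you reduce it to $B(1-B)+A^2\ge 0$, which is an equally valid and slightly cleaner check.
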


We now describe the construction of the e-process for adaptively determining the number of generator updates. 

At outer iteration \(t\) and generator iteration \(\ell\), we draw a fresh evaluation mini-batch, for \(b=1,\ldots,B\),
\[
X_{t,\ell,b}^{G}\stackrel{\mathrm{iid}}{\sim}P,
\quad
U_{t,\ell,b}^{G}\stackrel{\mathrm{iid}}{\sim}P_U,
\quad
Y_{t,\ell,b}^{G}
=
G_{\theta_g^{t,\ell}}(U_{t,\ell,b}^{G})
\stackrel{\mathrm{iid}}{\sim}Q_{t,\ell}.
\]
For a chosen construction
\(k_G\in\{\mathrm{sep},\mathrm{diff}\}\), we compute
\[
E_{t,\ell,b}^{G}
=
E_{\widetilde D_t}^{G,k_G}
\left(X_{t,\ell,b}^{G},Y_{t,\ell,b}^{G}\right),
\]
where \(E_{\widetilde D_t}^{G,\mathrm{sep}}\) and
\(E_{\widetilde D_t}^{G,\mathrm{diff}}\) are given in
Proposition~\ref{prop:generator-evalue}. Define
\[
\begin{split}
    E_{t,\ell}^{G,\mathrm{mb}}
&=
\prod_{b=1}^{B}
\left(
\frac12+\frac12E_{t,\ell,b}^{G}
\right) \\
\mathcal E_{t,\ell}^{G}
&=
\prod_{j=1}^{\ell}
\left\{
1-\rho_G+\rho_GE_{t,j}^{G,\mathrm{mb}}
\right\}, \quad \mathcal E_{t,0}^{G}=1
\end{split}
\]
for $\rho_G\in[0,1].$ Then \(\{\mathcal E_{t,\ell}^{G}\}_{\ell\ge0}\) is a nonnegative
supermartingale under \(\widetilde H_{\infty,t}^{G}(b_G)\). Therefore, we use stopping time for generator updating phase as 
\[
\tau_t^G
=
\inf\left\{
\ell\ge1:
\mathcal E_{t,\ell}^{G}
\ge\frac1{\alpha_G}
\right\}
\wedge L_G,
\]
where \(L_G\) is the maximum number of generator updates. At \(\tau_t^G\), we terminate the generator phase and set $\theta_g^{t+1} = \theta_g^{t,\tau_t^G}$. Because this changes the generator distribution from \(Q_{\theta_g^t}\) to \(Q_{\theta_g^{t+1}}\), we then restart the discriminator phase, initialize $ \theta_d^{t+1,0} = \theta_d^{t,\tau_t^D},$ and reset its e-process by setting $\mathcal E_{t+1,0}^{D}=1.$ 

\begin{remark}
    We emphasize that the anytime-validity guarantee is phase-specific, because the e-process is reset after each switch. 
\end{remark}

\section{Experiments}
\label{sec:experiments}

\subsection{Mixture of 16 Gaussians}

We first consider a synthetic two-dimensional Gaussian mixture with 16 equally weighted modes arranged on a \(4\times4\) grid. Specifically, the data-generating distribution is
\[
P
=
\frac{1}{16}
\sum_{j=1}^{16}
\mathcal N(\mu_j,0.2I_2),
\qquad
\mu_j\in\{-9,-3,3,9\}^2,
\]
and we generate \(n=5{,}000\) observations from \(P\).

For simulation study, we compare four fixed update schedules, \(1_D{:}1_G\), \(5_D{:}1_G\), \(1_D{:}5_G\), and \(5_D{:}5_G\), with the proposed adaptive schedule. Here, \(k_D{:}k_G\) means that \(k_D\) discriminator updates are followed by \(k_G\) generator updates at each outer training iteration. For the adaptive schedule, we set the discriminator and generator discrepancy thresholds to $(a_D,b_G)=(0.01,0.05)$ and $(\alpha_D,\alpha_G)=(0.1,0.1)$. 

The main experiments use original GAN objective. For a discriminator score \(D\), the discriminator and generator losses are given by
\[
\begin{split}
    \mathcal L_D
&=
\mathbb E_{ P}
\left[
\operatorname{softplus}\{-D_{\theta_d}(X)\}
\right]
+
\mathbb E_{ Q_{\theta_g}}
\left[
\operatorname{softplus}\{D_{\theta_d}(Y)\}
\right],\\
\mathcal L_G
&=
\mathbb E_{ Q_{\theta_g}}
\left[
\operatorname{softplus}\{-D_{\theta_d}(Y)\}
\right],
\end{split}
\]
respectively.  Each method is trained for \(6{,}000\) outer iterations. Additional details regarding the network architectures, minibatch size, learning rates, betting fractions, and the minimum and maximum numbers of adaptive updates are provided in the Supplementary Material.

Figure~\ref{fig:gaussian16} demonstrates that the quality of the generated samples depends strongly on the relative numbers of discriminator and generator updates. The \(1_D{:}1_G\), \(5_D{:}1_G\), and \(1_D{:}5_G\) schedules fail to recover the multimodal structure, whereas \(5_D{:}5_G\) captures such multimodal structure. The proposed adaptive schedule most faithfully recovers all \(16\) modes without requiring a prespecified update ratio.

\begin{figure}[!t]
    \centering
    % Row 1
    \begin{subfigure}{0.47\linewidth}
        \centering
        \includegraphics[
            width=\linewidth,
            height=3.2cm,
            keepaspectratio
        ]{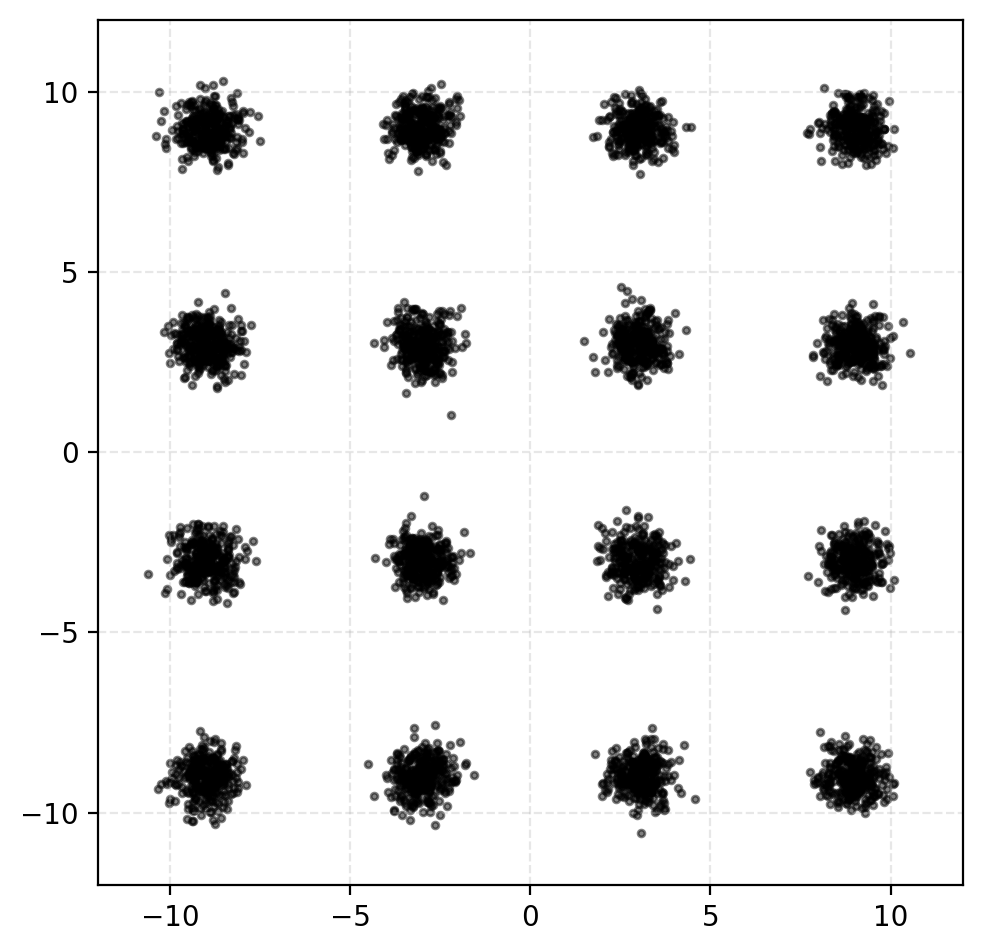}
        \caption{True data}
    \end{subfigure}
    \hfill
    \begin{subfigure}{0.47\linewidth}
        \centering
        \includegraphics[
            width=\linewidth,
            height=3.2cm,
            keepaspectratio
        ]{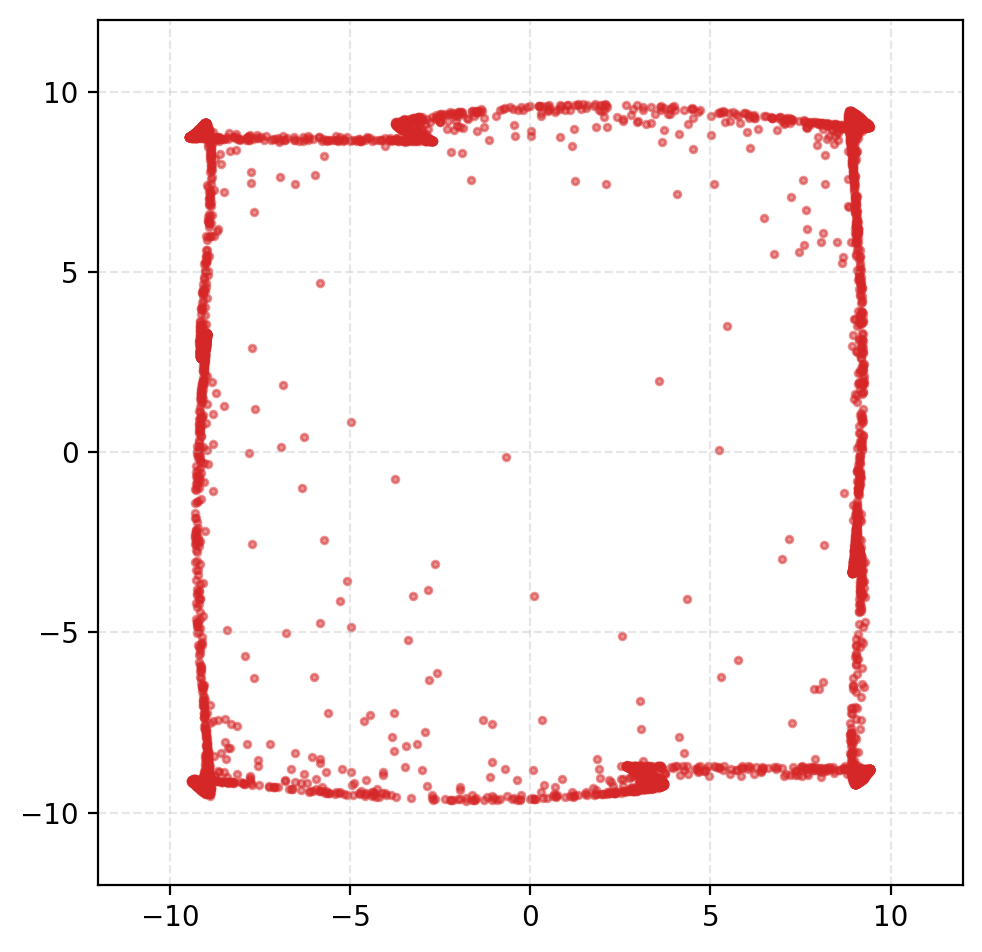}
        \caption{Fixed schedule ($1{:}1$)}
    \end{subfigure}

    \vspace{0.1em}

    % Row 2
    \begin{subfigure}{0.47\linewidth}
        \centering
        \includegraphics[
            width=\linewidth,
            height=3.2cm,
            keepaspectratio
        ]{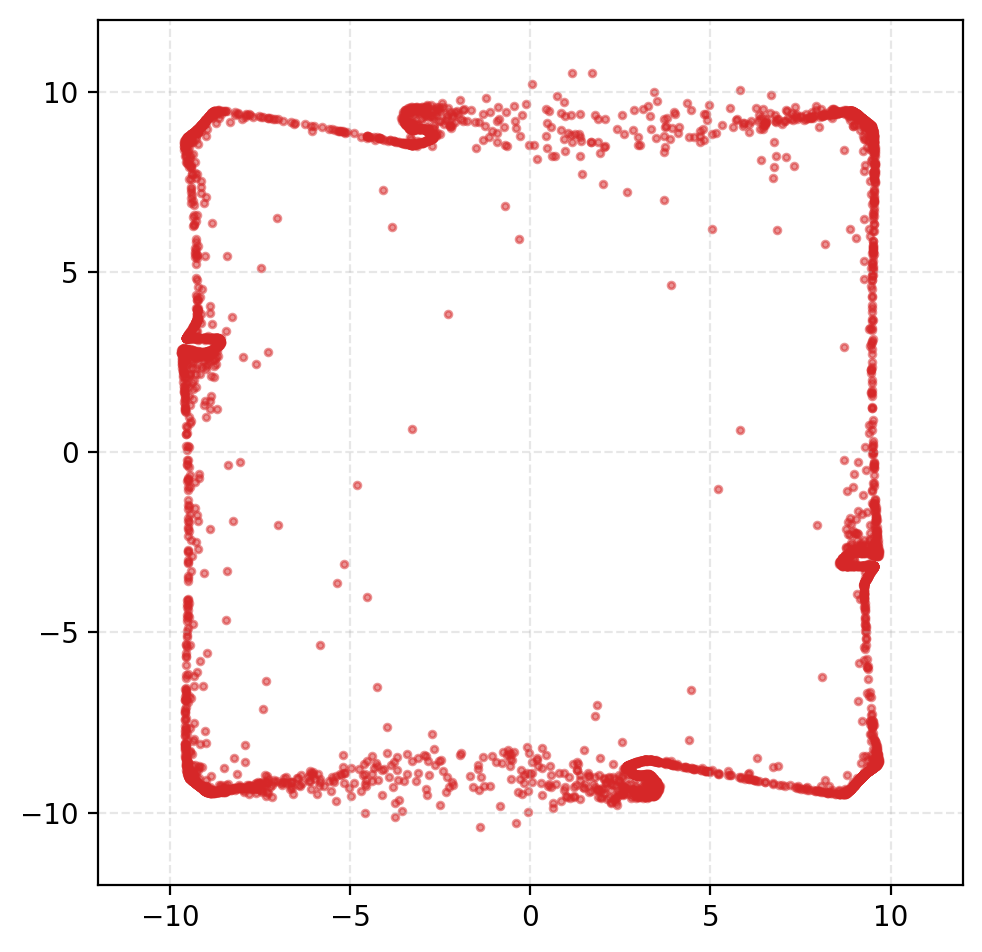}
        \caption{Fixed schedule ($5{:}1$)}
    \end{subfigure}
    \hfill
    \begin{subfigure}{0.47\linewidth}
        \centering
        \includegraphics[
            width=\linewidth,
            height=3.2cm,
            keepaspectratio
        ]{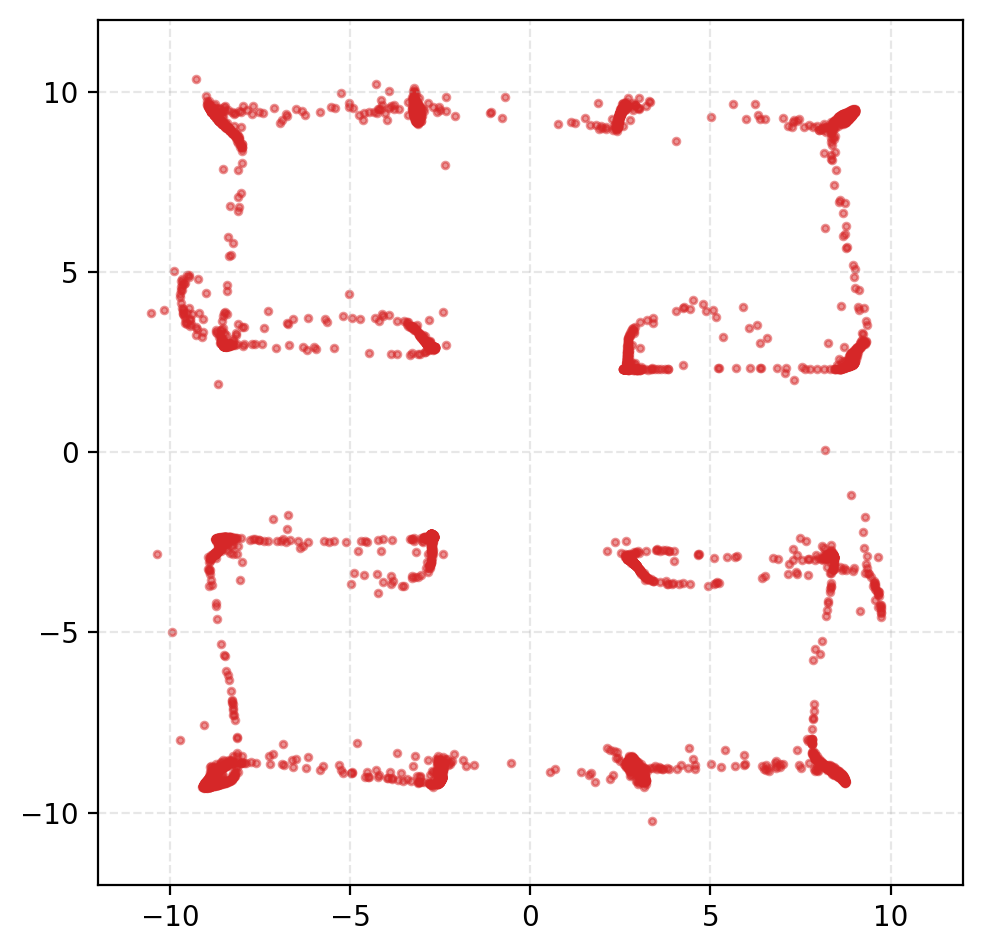}
        \caption{Fixed schedule ($1{:}5$)}
    \end{subfigure}

    \vspace{0.1em}

    % Row 3
    \begin{subfigure}{0.47\linewidth}
        \centering
        \includegraphics[
            width=\linewidth,
            height=3.2cm,
            keepaspectratio
        ]{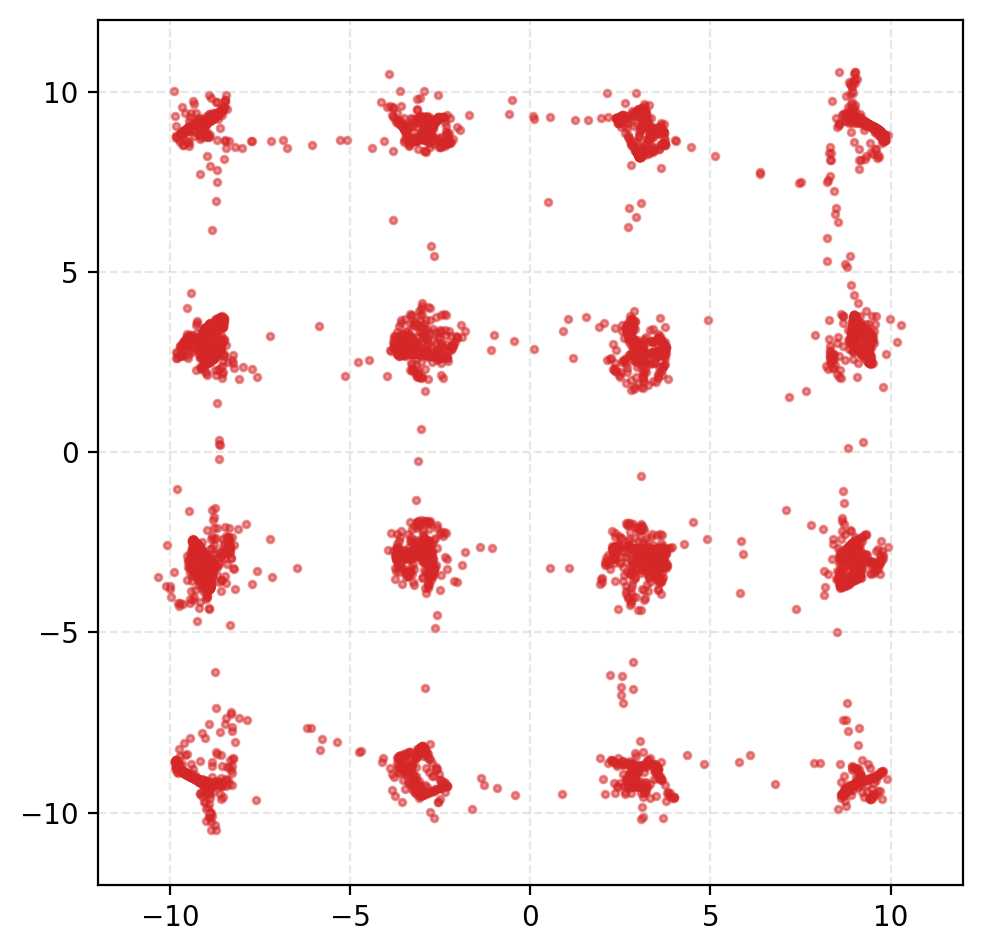}
        \caption{Fixed schedule ($5{:}5$)}
    \end{subfigure}
    \hfill
    \begin{subfigure}{0.47\linewidth}
        \centering
        \includegraphics[
            width=\linewidth,
            height=3.2cm,
            keepaspectratio
        ]{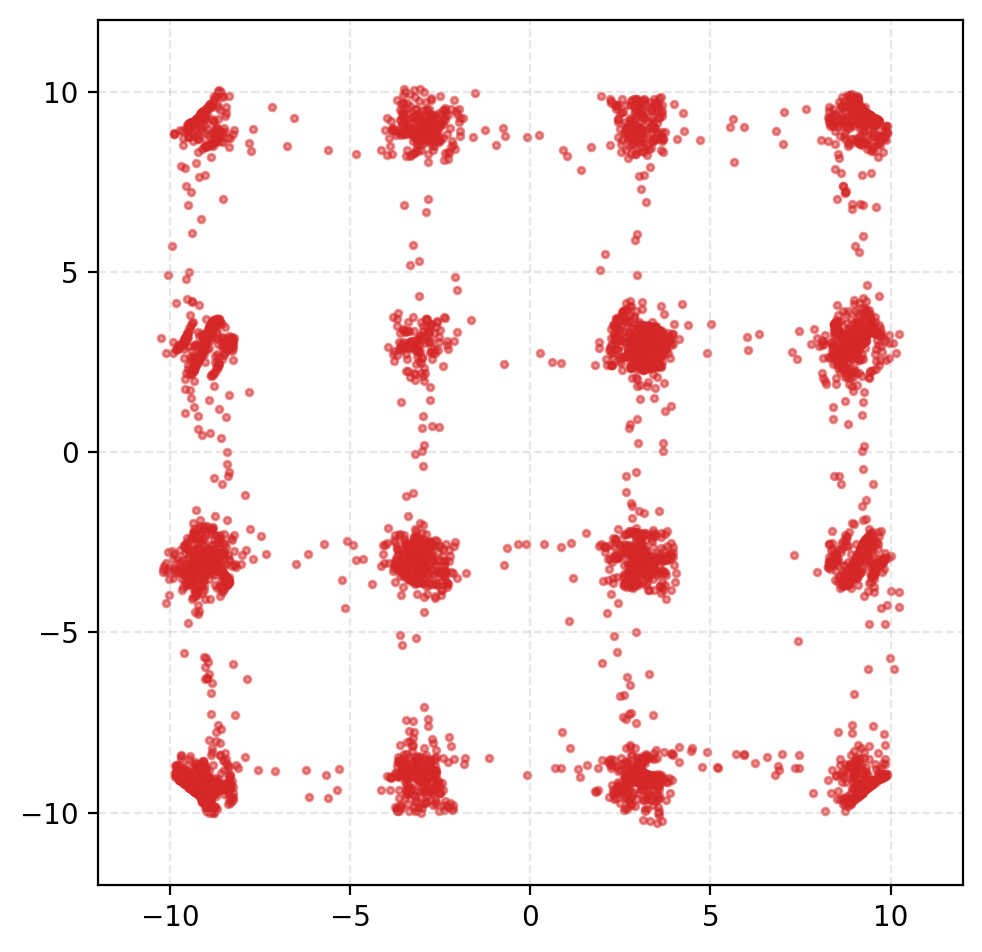}
        \caption{Adaptive schedule}
    \end{subfigure}

    \caption{
        Comparison of fixed and adaptive discriminator--generator update schedules on the 16-mode Gaussian mixture. The true data are shown as a reference.
    }
    \label{fig:gaussian16}
\end{figure}

Figure~\ref{fig:eprocess_trajectories} illustrates the dynamics of the discriminator--generator minimax game under the proposed adaptive update scheme. During approximately the first \(2{,}000\) outer iterations, both the discriminator and generator are often updated multiple times before the algorithm switches to the opposing phase, reflecting active
rebalancing between the two networks. In later iterations, the
discriminator phase typically reaches the maximum number of updates \(L_D\), whereas the generator phase often terminates after a single update. This pattern is consistent with convergence of the generator distribution toward the data distribution. As the two distributions become increasingly similar, the discriminator requires more updates but still fails to accumulate sufficient evidence of separation, while
the generator-phase e-process quickly indicates that the fixed
discriminator no longer certifies separation by at least \(b_G\).

\begin{figure}[!h]
    \centering
    \includegraphics[height=4cm, width=0.95\linewidth]{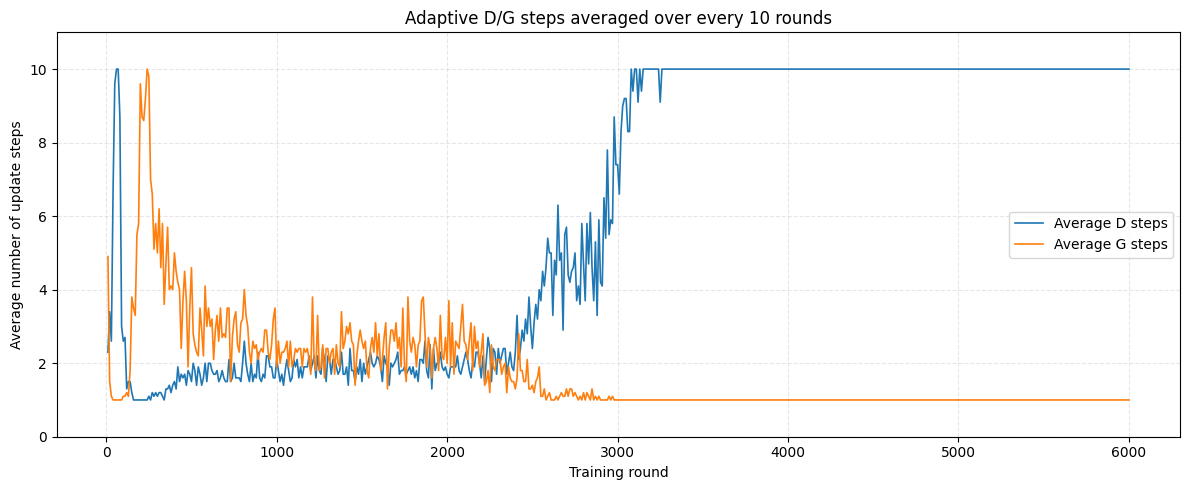}
    \caption{
    Adaptive numbers of discriminator and generator updates during training.
    The selected numbers of updates are recorded every 10 switches between
    the discriminator and generator phases.
    }
    \label{fig:eprocess_trajectories}
\end{figure}

For the original GAN objective, the fixed \(5_D{:}5_G\) schedule yields relatively good results. This schedule, however, does not carry over uniformly to other adversarial objectives. Fixed schedules that work well for the original GAN can behave quite differently under the hinge and Wasserstein losses. The proposed method instead adjusts the numbers of discriminator and generator updates using the two e-processes, and shows more consistent behavior across the objectives considered here. Results for the hinge and Wasserstein GAN losses are reported in the Supplementary Material.

\subsection{Benchmark Dataset: CIFAR-10}
\label{sec:benchmark_experiments}

We evaluate the proposed adaptive update strategy on CIFAR-10
\citep{krizhevsky2009learning}, a standard natural-image benchmark for assessing the quality of the generated samples. All experiments use the same ResNet-based SNGAN generator and discriminator following \citet{miyato2018spectral}, with spectral normalization applied to the discriminator.

We consider two commonly used adversarial objectives: the original GAN objective, referred to as softplus, and the hinge objective. The proposed method is compared with fixed update schedules of \(1{:}1\) and \(5{:}1\), as well as the Two Time-Scale Update Rule (TTUR) \citep{heusel2017ttur}. TTUR performs one discriminator update and one generator update per outer iteration while using different learning rates for the two networks. Performance is evaluated using Inception Score (IS) and Fr\'echet Inception Distance (FID).

Table~\ref{tab:benchmark_is_fid} compares the adaptive strategy with
the fixed-ratio and TTUR baselines. Each method is trained until the
generator has completed \(100{,}000\) updates. For each run, we report
the highest IS and the lowest FID observed over the every $5,000$ evaluation
checkpoints, and summarize the results as the mean and standard
deviation across five independent runs.

\begin{table}[!htbp]
\centering
\caption{CIFAR-10 results under the softplus and hinge GAN objectives.
Higher IS and lower FID are better, and the best mean value in each
loss group is shown in bold.}
\label{tab:benchmark_is_fid}
\footnotesize
\setlength{\tabcolsep}{4pt}
\renewcommand{\arraystretch}{1.06}
\begin{tabular}{@{}llcc@{}}
\toprule
GAN loss
& Training strategy
& IS \(\uparrow\)
& FID \(\downarrow\) \\
\midrule

\multirow{4}{*}{Softplus}
& Fixed \(1{:}1\)
& \(4.230 \pm 0.129\)
& \(83.407 \pm 6.443\) \\

& Fixed \(5{:}1\)
& \(8.072 \pm 0.043\)
& \(17.540 \pm 0.553\) \\

& TTUR
& \(7.139 \pm 0.494\)
& \(34.997 \pm 5.960\) \\

& Adaptive
& \(\mathbf{8.192 \pm 0.080}\)
& \(\mathbf{15.877 \pm 0.354}\) \\

\midrule

\multirow{4}{*}{Hinge}
& Fixed \(1{:}1\)
& \(4.416 \pm 0.387\)
& \(80.295 \pm 8.539\) \\

& Fixed \(5{:}1\)
& \(8.281 \pm 0.033\)
& \(13.542 \pm 0.316\) \\

& TTUR
& \(7.508 \pm 0.260\)
& \(31.106 \pm 5.629\) \\

& Adaptive
& \(\mathbf{8.330 \pm 0.079}\)
& \(\mathbf{13.208 \pm 0.357}\) \\

\bottomrule
\end{tabular}
\end{table}

Under both GAN objectives, the adaptive strategy yields the highest
mean IS and the lowest mean FID among the training strategies
considered. Further details on the architecture, preprocessing,
optimization, evaluation procedure, and adaptive parameters are
provided in the Supplementary Material.

\section{Conclusion}

We developed an e-process-guided framework for adaptively determining the numbers of discriminator and generator updates in GAN training. Rather than relying on a fixed update ratio, the proposed method uses sequential evidence to decide when to terminate the current optimization phase and switch to the opposing network. The resulting stopping rules are anytime-valid under repeated monitoring.

Our construction applies across GAN objectives through variational representations of distributional discrepancies. We introduced two e-variable constructions, established their validity under total variation neighborhoods, and characterized the corresponding oracle logarithmic e-power. The framework provides a statistically principled alternative to heuristic update schedules and may extend to other minimax problems requiring sequential assessment of competing components.

\section*{Acknowledgments}

The research of Hyunjoo Kim and Sehwan Kim was supported by the Global-Learning \& Academic Research Institution for Master's and PhD Students, and Postdocs (G-LAMP) Program of the National Research Foundation of Korea (NRF), funded by the Ministry of Education (No. RS-2025-25442252).

Guang Lin would like to thank the support of National Science Foundation (DMS-2533878, DMS-2053746, DMS-2134209, ECCS-2328241, CBET-2347401 and OAC-2311848), and U.S.~Department of Energy (DOE) Office of Science Advanced Scientific Computing Research program under the "Uncertainty Quantification for Multifidelity Operator Learning (MOLUcQ)" project (Project No. 81739), DE-SC0023161, the SciDAC LEADS Institute, and DOE–Fusion Energy Science, under grant number: DE-SC0024583.

\bibliography{aaai2027}

%%%%%%%%%%%%%%%%%%%%%%%%%%%%%%%%%%%%%%%%%%%%%%%%%%%%%%%%%%%%

\clearpage
\appendix

\section*{Supplementary Material}

\section{Complete Proofs}
\label{app:proofs}

\subsection{Proof of Proposition~\ref{prop:discriminator-evalue}}

\begin{proof} For the separable-score e-variable, define
\begin{equation}\label{d_D}
    d_D(x)=\sigma\{D(x)\},
\qquad 0\leq d_D(x)\leq 1.
\end{equation}
By the independence of \(X\) and \(Y\),
\[
\begin{aligned}
\mathbb E_{P\times Q}
\left[
4\sigma\{D(X)\}\sigma\{-D(Y)\}
\right]
&=
4\mathbb E_P[g(X)]
\mathbb E_Q[1-g(Y)].
\end{aligned}
\]
Let $u=\mathbb E_P[g(X)],$ and $v=\mathbb E_Q[1-g(Y)].$ By definition of $TV(\cdot,\cdot)$, we can derive
\[
u+v
\leq
1+\operatorname{TV}(P,Q)
\leq
1+a_D.
\]
And by \(4uv\leq(u+v)^2\),
\[
\mathbb E_{P\times Q}
\left[
4\sigma\{D(X)\}\sigma\{-D(Y)\}
\right]
\leq
(1+a_D)^2.
\]
It follows that $\mathbb E_{P\times Q}
\left[
E_D^{\mathrm{sep}}(X,Y)
\right]
\leq 1.$

For the difference-score e-variable, define
\[
h_D(x,y)=\sigma\{D(x)-D(y)\}.
\]
Using the identity $\sigma(z)+\sigma(-z)=1,$ we have
\[
\mathbb E_{Q\times P}[h(X,Y)]
=
\mathbb E_{P\times Q}[h(Y,X)]
=
1-\mathbb E_{P\times Q}[h(X,Y)].
\]
Hence,
\[
\begin{aligned}
2\mathbb E_{P\times Q}[h(X,Y)]-1
&=
\mathbb E_{P\times Q}[h(X,Y)]
-
\mathbb E_{Q\times P}[h(X,Y)]\\
&\leq
\operatorname{TV}(P\times Q,Q\times P),
\end{aligned}
\]
where the inequality follows from \(0\leq h\leq1\).

Let \(\delta=\operatorname{TV}(P,Q)\). By taking two independent
maximal couplings of \(P\) and \(Q\),
\[
\operatorname{TV}(P\times Q,Q\times P)
\leq
1-(1-\delta)^2
=
2\delta-\delta^2.
\]
Under \(H_0^D(a_D)\), we have \(\delta\leq a_D\). Since the function
\(t\mapsto 2t-t^2\) is non-decreasing on \([0,1]\),
\[
\operatorname{TV}(P\times Q,Q\times P)
\leq
2\delta-\delta^2
\leq
2a_D-a_D^2.
\]
It follows that
\[
2\mathbb E_{P\times Q}
\left[
\sigma\{D(X)-D(Y)\}
\right]
\leq
1+2a_D-a_D^2,
\]
and therefore
\[
\mathbb E_{P\times Q}
\left[
E_D^{\mathrm{diff}}(X,Y)
\right]
\leq 1.
\]
Thus, both \(E_D^{\mathrm{sep}}\) and \(E_D^{\mathrm{diff}}\) are
e-variables under \(H_0^D(a_D)\).
\end{proof}

\subsection{Proof of Proposition~\ref{prop:oracle-epower}}

\begin{proof}
For the separable-score e-variable, using \(d_D\) defined in Equation~\eqref{d_D}, we have
\begin{align*}
\Gamma_{P,Q}^{\mathrm{sep}}(D)
&=
\log 4-2\log(1+a_D)\\
&\quad+
\int
\left[
p(x)\log d_D(x)
+
q(x)\log\{1-d_D(x)\}
\right]d\mu(x).
\end{align*}
For each \(x\), the integrand is maximized at $d_D^\star(x)
=
\frac{p(x)}{p(x)+q(x)},$ which is equivalent to
\[
D_{\mathrm{sep}}^\star(x)
=
\log\frac{p(x)}{q(x)}.
\]
Substituting this optimizer gives
\[
\sup_D\Gamma_{P,Q}^{\mathrm{sep}}(D)
=
2\operatorname{JS}(P,Q)
-
2\log(1+a_D).
\]

For the difference-score e-variable, let
\[
R=P\times Q,
\qquad
S=Q\times P,
\qquad
T_D(x,y)=D(x)-D(y).
\]
By symmetry, 
\[
\mathbb E_R[\log\sigma\{T_D(X,Y)\}]
=
\mathbb E_S[\log\sigma\{-T_D(X,Y)\}],
\]
and hence
\begin{align*}
\Gamma_{P,Q}^{\mathrm{diff}}(D)
&=
\log 2-\log(1+2a_D-a_D^2)\\
&\quad+
\frac12
\left[
\mathbb E_R[\log\sigma\{T_D(X,Y)\}]
+
\mathbb E_S[\log\sigma\{-T_D(X,Y)\}]
\right].
\end{align*}
The logistic variational objective is maximized at
\[
T_D^\star(x,y)
=
\log\frac{p(x)q(y)}{q(x)p(y)}
=
\log\frac{p(x)}{q(x)}
-
\log\frac{p(y)}{q(y)}.
\]
This optimizer is attained by
\[
D_{\mathrm{diff}}^\star(x)
=
\log\frac{p(x)}{q(x)}+c,
\qquad c\in\mathbb R.
\]
Therefore,
\[
\sup_D\Gamma_{P,Q}^{\mathrm{diff}}(D)
=
\operatorname{JS}(P\times Q,Q\times P)
-
\log(1+2a_D-a_D^2).
\]

Now let $\delta=\operatorname{TV}(P,Q).$ The lower bound relating the Jensen--Shannon divergence to total variation gives
\[
\operatorname{JS}(P,Q)\geq h(\delta).
\]
Since \(h\) is increasing on \([0,1]\), under
\(H_1^D(b_D)\),
\[
\operatorname{JS}(P,Q)\geq h(b_D).
\]
Thus,
\[
\sup_D\Gamma_{P,Q}^{\mathrm{sep}}(D)
\geq
2\{h(b_D)-\log(1+a_D)\},
\]
which is positive whenever
\[
h(b_D)>\log(1+a_D).
\]

Moreover, marginalization cannot increase total variation, so
\[
\operatorname{TV}(P\times Q,Q\times P)
\geq
\operatorname{TV}(P,Q)
\geq b_D.
\]
Consequently,
\[
\operatorname{JS}(P\times Q,Q\times P)
\geq h(b_D),
\]
and hence
\[
\sup_D\Gamma_{P,Q}^{\mathrm{diff}}(D)
\geq
h(b_D)-\log(1+2a_D-a_D^2),
\]
which is positive whenever
\[
h(b_D)>
\log(1+2a_D-a_D^2).
\]
This proves the result.
\end{proof}

\subsection{Proof of Proposition~\ref{prop:generator-evalue}}

\begin{proof}
Let $\widetilde d_t(x)=\sigma\{\widetilde D_t(x)\}.$ For a fixed generator iteration \(\ell\), the operational null
\eqref{HP:generator} implies that
\[
\Delta_{t,\ell}(\widetilde D_t)
=
\mathbb E_P[\widetilde d_t(X)]
-
\mathbb E_{Q_{t,\ell}}[\widetilde d_t(Y)]
\geq b_G.
\]

For the separable-score e-variable, let
\[
u=\mathbb E_{Q_{t,\ell}}[\widetilde d_t(Y)],
\qquad
v=\mathbb E_P[1-\widetilde d_t(X)].
\]
Then
\[
u+v
=
1-\Delta_{t,\ell}(\widetilde D_t)
\leq
1-b_G.
\]
By the independence of \(X\) and \(Y\), and the inequality
\(4uv\leq (u+v)^2\),
\begin{align*}
&\mathbb E_{P\times Q_{t,\ell}}
\left[
4\sigma\{\widetilde D_t(Y)\}
\sigma\{-\widetilde D_t(X)\}
\right]\\
&\qquad=
4\mathbb E_{Q_{t,\ell}}[\widetilde d_t(Y)]
\mathbb E_P[1-\widetilde d_t(X)]\\
&\qquad=
4uv
\leq
(u+v)^2
\leq
(1-b_G)^2.
\end{align*}
Therefore, $\mathbb E_{P\times Q_{t,\ell}}
\left[
E_{\widetilde D_t}^{G,\mathrm{sep}}(X,Y)
\right]
\leq 1.$

For the difference-score e-variable, note that, for any
\(r,s\in\mathbb R\),
\[
\sigma(s-r)
\leq
\sigma(s)+\sigma(-r).
\]
Indeed, direct calculation gives
\[
\sigma(s)+\sigma(-r)-\sigma(s-r)
=
\frac{
e^{2r+s}+e^{r+s}+e^r+e^{2s}
}{
(1+e^r)(e^r+e^s)(1+e^s)
}
\geq 0.
\]
Applying this inequality with
\(r=\widetilde D_t(X)\) and \(s=\widetilde D_t(Y)\), we obtain
\begin{align*}
&\mathbb E_{P\times Q_{t,\ell}}
\left[
\sigma\{\widetilde D_t(Y)-\widetilde D_t(X)\}
\right]\\
&\qquad\leq
\mathbb E_{Q_{t,\ell}}[\widetilde d_t(Y)]
+
\mathbb E_P[1-\widetilde d_t(X)]\\
&\qquad=
1-\Delta_{t,\ell}(\widetilde D_t)
\leq
1-b_G.
\end{align*}
Consequently,
\[
\mathbb E_{P\times Q_{t,\ell}}
\left[
E_{\widetilde D_t}^{G,\mathrm{diff}}(X,Y)
\right]
\leq 1.
\]
Thus, both quantities are e-variables under
\eqref{HP:generator}.
\end{proof}

\section{Detailed Description of the Adaptive Training Algorithm}\label{app:algorithm}

Algorithm~\ref{alg:adaptive_full} summarizes the adaptive training procedure used in the CIFAR-10 experiments. During each discriminator (generator) phase, the corresponding network is updated using training mini-batches while parameter gradients are disabled for the other network.

After every update, a fresh evaluation mini-batch is used to compute the mini-batch e-value according to the corresponding definition in the main text. The updated network is temporarily set to evaluation mode when computing this statistic. The cumulative e-process is then updated using product aggregation, and the current phase terminates once the stopping criterion is satisfied or the maximum number of updates is reached. 

\begin{algorithm}[!t]
\small
\caption{E-Process-Guided Adaptive GAN Training}
\label{alg:adaptive_full}
\begin{algorithmic}[1]

\Require  Generator \(G\), discriminator \(D\), stopping levels \(\alpha_D,\alpha_G\), betting fractions \(\rho_D,\rho_G\), minimum update counts \(L_D^{\min},L_G^{\min}\), maximum update counts \(L_D,L_G\), discriminator and generator radius parameters \(a_D,b_G\), total generator-update budget \(T_G\)

\State Initialize \(t_G\gets0\)

\While{\(t_G<T_G\)}

    \Statex \textbf{Discriminator phase}
    \State \(\mathcal E_D\gets1\)

    \For{\(\ell=1,\ldots,L_D\)}
        \State Update \(D\) using a training mini-batch
        \State Draw a fresh evaluation batch
        \(\{(x_i,\tilde x_i)\}_{i=1}^m\), where
        \(\tilde x_i=G(z_i)\)
        
        Compute 
        \[
        E_D^{\mathrm{mb}}
        =
        \prod_{i=1}^m
        \left[
        \frac12+
        \frac{
        2\sigma\{D(x_i)\}\sigma\{-D(\tilde x_i)\}
        }{(1+a_D)^2}
        \right]
        \]
        
        \State
        \(\mathcal E_D\gets
        \mathcal E_D(1-\rho_D+\rho_D E_D^{\mathrm{mb}})\)

        \If{\(\ell\ge L_D^{\min}\) and
        \(\mathcal E_D>1/\alpha_D\)}
            \State \textbf{break}
        \EndIf
    \EndFor

    \Statex \textbf{Generator phase}
    \State \(\mathcal E_G\gets1\)

    \For{\(\ell=1,\ldots,L_G\)}
        \If{\(t_G\ge T_G\)}
            \State \textbf{break}
        \EndIf

        \State Update \(G\) using a latent training mini-batch;
        \(t_G\gets t_G+1\)
        \State Draw a fresh evaluation batch
        \(\{(x_i,\tilde x_i)\}_{i=1}^m\), where
        \(\tilde x_i=G(z_i)\)
        \State Compute
        \[
        E_G^{\mathrm{mb}}
       =
        \prod_{i=1}^m
        \left[
        \frac12+
        \frac{
        2\sigma\{D(\tilde x_i)\}\sigma\{-D(x_i)\}
        }{(1-b_G)^2}
        \right]
        \]
        \State
        \(\mathcal E_G\gets
        \mathcal E_G(1-\rho_G+\rho_G E_G^{\mathrm{mb}})\)

        \If{\(\ell\ge L_G^{\min}\) and
        \(\mathcal E_G>1/\alpha_G\)}
            \State \textbf{break}
        \EndIf
    \EndFor

\EndWhile

\end{algorithmic}
\end{algorithm}

\section{Additional Details for Mixture of 16 Gaussians Experiments} \label{app:gaussian_details}

The latent dimension is 16. Both the generator and discriminator are
multilayer perceptrons with four hidden layers of width 200. The generator uses hyperbolic-tangent hidden activations, while the discriminator uses leaky-ReLU activations. Both networks are optimized using Adam with learning rate
\(2\times10^{-4}\) and hyper parameter $(\beta_1,\beta_2)=(0,0.9).$ The mini-batch size is 100, and each run consists of \(6{,}000\) outer training rounds.

For the adaptive method, the minimum and maximum numbers of updates for both networks are 1 and 10, respectively. The adaptive parameters are $(\alpha_D,\alpha_G)=(0.1,0.1)$, $(a_D,b_G)=(0.01,0.05)$ and $(\rho_D,\rho_G)=(0.5,0.5)$.

\subsection{Results for Alternative Loss Functions}

The main manuscript presents the representative result obtained using the original GAN loss. Additional experiments using the hinge and Wasserstein objectives are reported in this section.

For the hinge GAN loss, the discriminator and generator are trained according to
\[
\begin{split}
\min_D \,
& \mathbb E_{X\sim P}
\left[
\max\{0,1-D(X)\}
\right]
+
\mathbb E_{Y\sim Q_\theta}
\left[
\max\{0,1+D(Y)\}
\right],\\
\min_\theta \,
& -\mathbb E_{Y\sim Q_\theta}
\left[
D(Y)
\right].
\end{split}
\]

Figure~\ref{fig:sup_hinge} shows that, under the hinge GAN loss, the quality of the generated samples depends strongly on the relative numbers of discriminator and generator updates. The $1_D:1_G$ and $5_D:5_G$ schedules recover the multimodal structure reasonably well, whereas the commonly used $5_D:1_G$ schedule fails to do so. The adaptive schedule most faithfully recovers all 16 modes without requiring a prespecified update ratio.

\begin{figure}[!htbp]
    \centering
    \includegraphics[ width=1\linewidth]{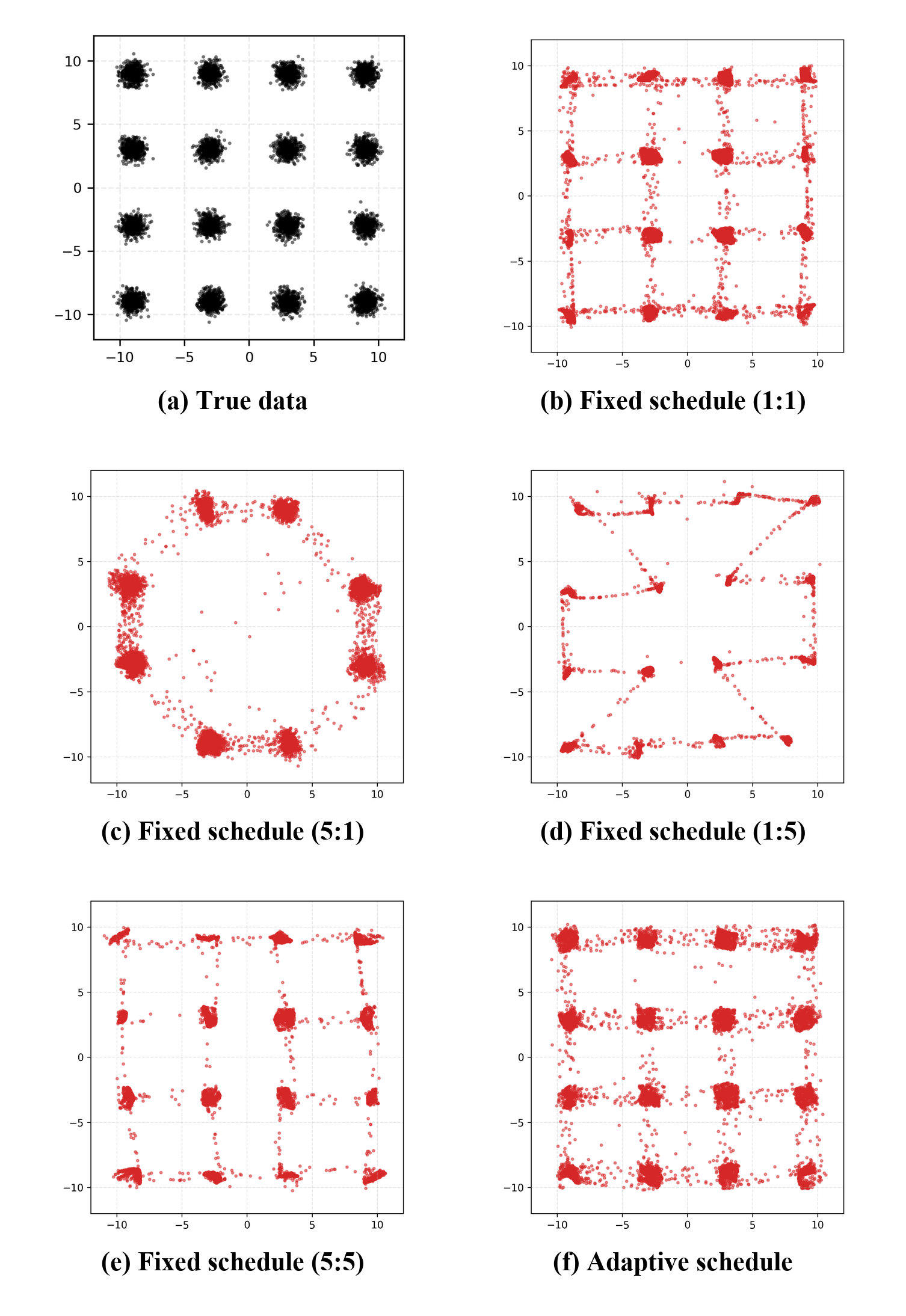}
    \caption{Comparison of fixed and adaptive discriminator--generator update schedules under the hinge GAN loss on the 16-mode Gaussian mixture.}
    \label{fig:sup_hinge}
\end{figure}

\FloatBarrier

\needspace{3\baselineskip}
For the Wasserstein GAN-GP loss, the critic and generator are trained according to
\[
\begin{split}
\max_D \,
& \mathbb E_{X\sim P}
\left[
D(X)
\right]
-
\mathbb E_{Y\sim Q_\theta}
\left[
D(Y)
\right]
-
\mathcal L_{\mathrm{GP}},\\
\min_\theta \,
& -\mathbb E_{Y\sim Q_\theta}
\left[
D(Y)
\right].
\end{split}
\]

Figure~\ref{fig:sup_wasserstein} further demonstrates that the
performance of the Wasserstein GAN is highly sensitive to the relative
numbers of critic and generator updates. The $1_D:1_G$ and $5_D:1_G$ schedules
generate diffuse grid-like patterns rather than samples concentrated
around the target modes. Increasing the relative number of generator
updates to $1_D:5_G$ leads to severe mode collapse, whereas the $5_D:5_G$
schedule recovers the multimodal structure only partially and with
noticeable distortion. In contrast, the proposed adaptive schedule
most accurately recovers the 16-mode structure without requiring a
prespecified update ratio.

\begin{figure}[!htbp]
    \centering
    \includegraphics[ width=1\linewidth]{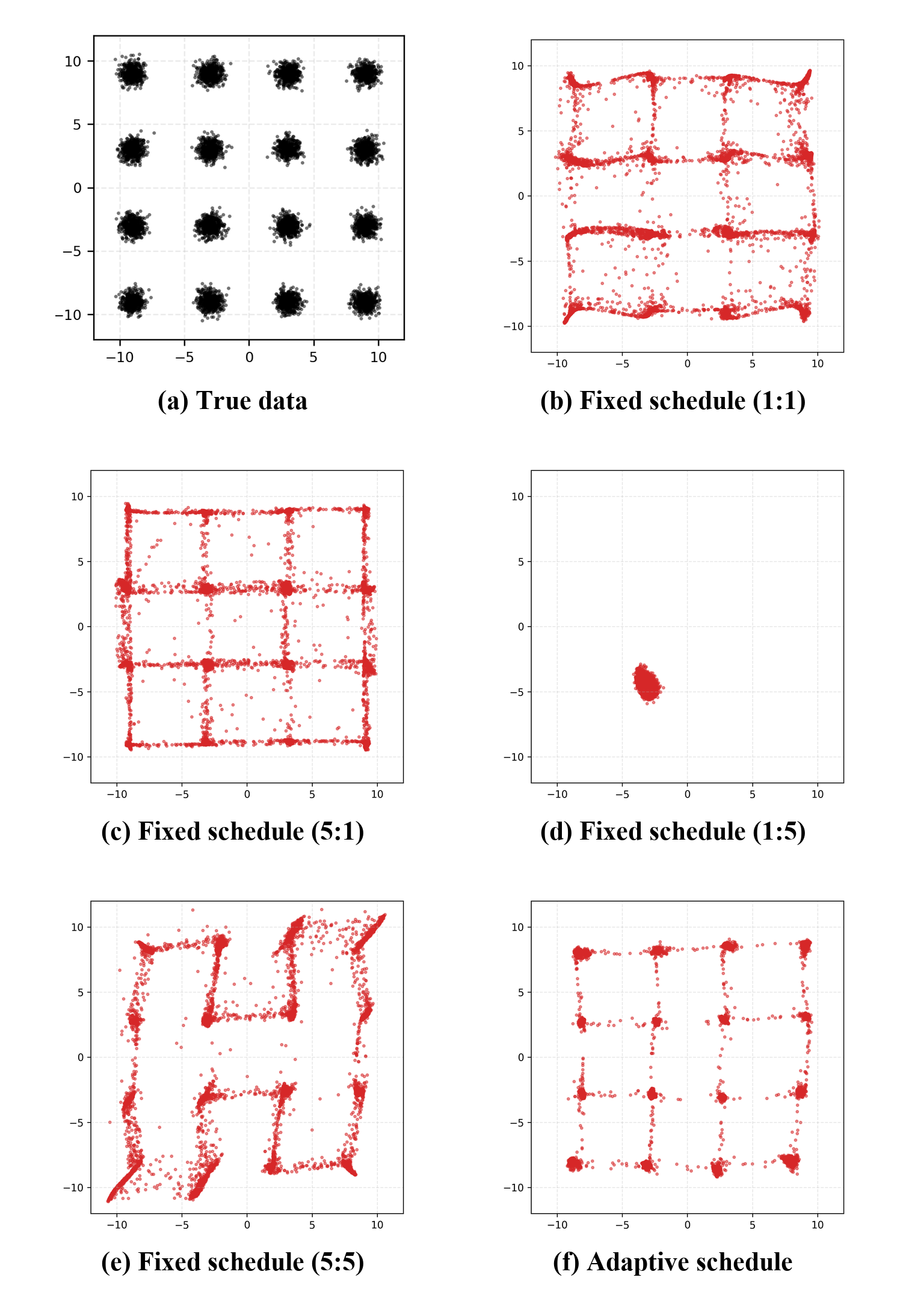}
    \caption{Comparison of fixed and adaptive discriminator--generator
update schedules under the Wasserstein GAN loss on the 16-mode
Gaussian mixture. }
    \label{fig:sup_wasserstein}
\end{figure}

\FloatBarrier

\section{Additional Details for the CIFAR-10 Experiments}
\label{app:cifar_details}

\subsection{Dataset and Preprocessing}

We use the CIFAR-10 training set. Images are converted to tensors and
normalized channel-wise using mean \(0.5\) and standard deviation \(0.5\).
During training, an independent positive uniform perturbation is added to
each normalized input value:
\[
\widetilde X
=
X+\varepsilon,
\qquad
\varepsilon_i\sim\operatorname{Uniform}[0,1/128).
\]
No additional data augmentation is applied. The perturbation is used only
in the training data pipeline and is not applied to generated images
during IS/FID evaluation.

\subsection{Network Architecture}

All CIFAR-10 experiments use the same ResNet-based SNGAN generator and
discriminator. The generator receives
\[
Z\sim\mathcal N(0,I_{128})
\]
and produces \(32\times32\) RGB images. It consists of an initial linear
layer followed by three residual upsampling blocks with 256 channels.
The output layer applies batch normalization, ReLU, a \(3\times3\)
convolution, and a hyperbolic-tangent activation.

The discriminator consists of one optimized residual downsampling block,
followed by three residual blocks and global sum pooling. Spectral
normalization is applied to all discriminator convolutional and linear
layers. The same architecture and initialization procedure are used for all training strategies.

\subsection{Shared Training and Evaluation Settings}

The settings shared across all CIFAR-10 experiments are summarized in
Table~\ref{tab:cifar_shared_settings}.

\begin{table}[!htbp]
%\centering
\caption{Shared CIFAR-10 training settings.}
\label{tab:cifar_shared_settings}
\footnotesize
\setlength{\tabcolsep}{4pt}
\renewcommand{\arraystretch}{1.05}
\begin{tabular}{@{}ll@{}}
\toprule
Component & Setting \\
\midrule
Dataset & CIFAR-10 training set \\
Image resolution & \(32\times32\) RGB \\
Latent distribution & \(\mathcal N(0,I_{128})\) \\
Optimizer & Adam \\
Adam coefficients & \((\beta_1,\beta_2)=(0,0.9)\) \\
Discriminator batch size & 64 \\
Generator batch size & 128 \\
Training budget & 100,000 generator updates \\
Learning-rate schedule & Linear decay over generator updates \\
Random seeds & \(0,1,2,3,4\) \\
\bottomrule
\end{tabular}
\end{table}

For all compared training strategies, the discriminator and generator
learning rates are linearly decayed according to the cumulative number of
generator updates and reach zero at the end of the
100,000-generator-update training budget. Consequently, the total number
of discriminator updates may differ across training strategies, while
all methods use the same generator-update-based decay horizon.

\subsection{Method-Specific Configurations}

Table~\ref{tab:cifar_method_settings} summarizes the learning rates and
update schedules used by the compared methods.

\begin{table}[!htbp]
\centering
\caption{Method-specific CIFAR-10 training configurations. The listed
learning rates are their initial values.}
\label{tab:cifar_method_settings}
\scriptsize
\setlength{\tabcolsep}{2pt}
\renewcommand{\arraystretch}{1.04}
\begin{tabular}{@{}lcccc@{}}
\toprule
Method
& Update schedule
& Initial \(\eta_D\)
& Initial \(\eta_G\)
& Scheduling rule \\
\midrule
Fixed \(1{:}1\)
& \(1_D{:}1_G\)
& \(2{\times}10^{-4}\)
& \(2{\times}10^{-4}\)
& Fixed ratio \\

Fixed \(5{:}1\)
& \(5_D{:}1_G\)
& \(2{\times}10^{-4}\)
& \(2{\times}10^{-4}\)
& Fixed ratio \\

TTUR
& \(1_D{:}1_G\)
& \(4{\times}10^{-4}\)
& \(1{\times}10^{-4}\)
& Unequal learning rates \\

Adaptive
& \(1\text{--}10\) each
& \(2{\times}10^{-4}\)
& \(2{\times}10^{-4}\)
& E-process stopping \\
\bottomrule
\end{tabular}
\end{table}

For the adaptive method, the discriminator and generator independently
perform between 1 and 10 updates during their respective training phases.
The betting fractions are fixed at
\[
\rho_D=\rho_G=0.5.
\]
The resulting numbers of discriminator and generator updates are selected
dynamically and do not constitute a fixed update ratio.

\FloatBarrier

\subsection{Adaptive-Parameter Selection}
\label{app:adaptive_selection}

We examine adaptive scheduling configurations separately for the
softplus and hinge objectives because their discriminator-score dynamics
differ during training. All candidate configurations use the same
generator-update budget, evaluation protocol, and five random seeds.

The generator-side parameters were fixed at
\[
\alpha_G=0.10,\qquad a_G=0.30,
\]
based on preliminary experiments that produced stable generator-update
behavior. They were not included in the systematic search reported here.
To keep the computational cost manageable, we varied only the
discriminator-side parameters over
\[
\alpha_D\in\{0.05,0.10\},
\qquad
a_D\in\{0.10,0.15\},
\]
resulting in four evaluated combinations. The betting fractions were
fixed at
\[
\rho_D=\rho_G=0.5.
\]

For each seed and candidate configuration, the highest IS and lowest FID
were selected independently over the common set of reported evaluation
checkpoints. Thus, the two metrics may correspond to different model
checkpoints. Results are reported as the mean and sample standard
deviation across seeds \(0,1,2,3,4\).

\begin{table}[!htbp]
\centering
\caption{Adaptive scheduling configurations evaluated for the softplus
objective. For each seed, the highest IS and lowest FID are selected
independently across evaluation checkpoints.}
\label{tab:softplus_candidates}
\footnotesize
\setlength{\tabcolsep}{3pt}
\renewcommand{\arraystretch}{1.05}
\begin{tabular}{@{}cccc@{}}
\toprule
\(\alpha_D\)
& \(a_D\)
& IS \(\uparrow\)
& FID \(\downarrow\) \\
\midrule
\(\mathbf{0.10}\)
& \(\mathbf{0.10}\)
& \(\mathbf{8.192\pm0.080}\)
& \(\mathbf{15.877\pm0.354}\) \\

0.10
& 0.15
& \(8.167\pm0.071\)
& \(15.762\pm0.639\) \\

0.05
& 0.10
& \(8.112\pm0.054\)
& \(15.589\pm0.418\) \\

0.05
& 0.15
& \(8.113\pm0.037\)
& \(16.043\pm0.313\) \\
\bottomrule
\end{tabular}
\end{table}

For the softplus objective, mean IS was used as the primary selection
criterion, subject to the selected configuration also achieving a lower
mean FID than the Fixed \(5{:}1\) baseline. Based on this criterion, we
selected
\[
\alpha_D=0.10,\qquad a_D=0.10,
\]
which achieved the highest mean IS among the evaluated configurations,
with an IS of \(8.192\pm0.080\) and an FID of
\(15.877\pm0.354\). Although the configuration
\((\alpha_D,a_D)=(0.05,0.10)\) achieved a slightly lower mean FID, its
mean IS was lower.

\begin{table}[!t]
\centering
\caption{Adaptive scheduling configurations evaluated for the hinge
objective. For each seed, the highest IS and lowest FID are selected
independently across evaluation checkpoints.}
\label{tab:hinge_candidates}
\footnotesize
\setlength{\tabcolsep}{3pt}
\renewcommand{\arraystretch}{1.05}
\begin{tabular}{@{}cccc@{}}
\toprule
\(\alpha_D\)
& \(a_D\)
& IS \(\uparrow\)
& FID \(\downarrow\) \\
\midrule
0.10
& 0.10
& \(8.289\pm0.050\)
& \(13.218\pm0.341\) \\

0.10
& 0.15
& \(8.327\pm0.101\)
& \(13.248\pm0.701\) \\

0.05
& 0.10
& \(8.313\pm0.061\)
& \(13.360\pm0.331\) \\

\(\mathbf{0.05}\)
& \(\mathbf{0.15}\)
& \(\mathbf{8.330\pm0.079}\)
& \(\mathbf{13.208\pm0.357}\) \\
\bottomrule
\end{tabular}
\end{table}

For the hinge objective, the final configuration was selected based on
both mean IS and mean FID across the five random seeds. We selected
\[
\alpha_D=0.05,\qquad a_D=0.15,
\]
because it achieved both the highest mean IS,
\(8.330\pm0.079\), and the lowest mean FID,
\(13.208\pm0.357\), among the evaluated configurations.

\FloatBarrier

\subsection{Evaluation Protocol}
\label{app:evaluation}

Model quality is evaluated at intervals of 5,000 generator updates. Let
\(\mathcal T\) denote the set of evaluation checkpoints used for
best-checkpoint selection.

At each evaluation checkpoint, the generator is switched to evaluation
mode, and 50,000 latent vectors are independently sampled from
\[
Z\sim\mathcal N(0,I_{128}).
\]
The generated images are transformed from the generator output range
\([-1,1]\) to \([0,1]\) according to
\[
X_{\mathrm{eval}}
=
\frac{G(Z)+1}{2}.
\]
The additive uniform perturbation applied to real images during training
is not applied to generated images during metric evaluation.

Inception Score (IS) is computed from the 50,000 generated images.
Fr\'echet Inception Distance (FID) is computed between the generated
images and cached Inception-feature statistics of the CIFAR-10 training
set stored in \texttt{cifar10.train.npz}. The CIFAR-10 test set is not
used for training or for the FID reference statistics.

The evaluation code first imports the metric interface from
\path{pytorch-gan-metrics} and uses
\path{pytorch-image-generation-metrics} as a compatible fallback when
the former import is unavailable. The same metric implementation, number
of generated images, FID reference statistics, evaluation checkpoints,
and checkpoint-selection rule are used for all training strategies.

For each seed \(s\), the reported best IS and best FID are selected
independently over \(\mathcal T\):
\[
\mathrm{IS}^{\mathrm{best}}_s
=
\max_{t\in\mathcal T}
\mathrm{IS}_{s,t},
\qquad
\mathrm{FID}^{\mathrm{best}}_s
=
\min_{t\in\mathcal T}
\mathrm{FID}_{s,t}.
\]
Thus, the highest IS and lowest FID for a given seed may correspond to
different model checkpoints. Accordingly, the reported results represent
seed-wise best-checkpoint performance rather than performance at a single
common training checkpoint.

The aggregate values reported in the main paper are
\[
\overline{\mathrm{IS}}
=
\frac{1}{5}
\sum_{s=0}^{4}
\mathrm{IS}^{\mathrm{best}}_s,
\qquad
\overline{\mathrm{FID}}
=
\frac{1}{5}
\sum_{s=0}^{4}
\mathrm{FID}^{\mathrm{best}}_s.
\]

The reported uncertainties are sample standard deviations across the
five independently trained runs. For either metric \(M\), let
\(M_s\) denote the seed-wise best value and define
\[
\overline M
=
\frac{1}{5}
\sum_{s=0}^{4}M_s.
\]
The sample standard deviation is then
\[
s_M
=
\sqrt{
\frac{1}{4}
\sum_{s=0}^{4}
\left(
M_s-\overline M
\right)^2
}.
\]
The split-based standard deviation returned internally by an individual
IS computation is logged separately and is not used as the across-seed
standard deviation reported in the result tables.

\section{Computing Infrastructure}
\label{app:infrastructure}

The reported experiments were conducted using the hardware and software
configuration summarized in
Table~\ref{tab:computing_infrastructure}.

\begin{table}[!ht]
\centering
\caption{Computing infrastructure used for the experiments.}
\label{tab:computing_infrastructure}
\footnotesize
\setlength{\tabcolsep}{4pt}
\renewcommand{\arraystretch}{1.08}
\begin{tabular}{@{}p{0.38\columnwidth}p{0.54\columnwidth}@{}}
\toprule
Component & Configuration \\
\midrule
Available GPUs
& \(2\times\) NVIDIA RTX A5000, 24\,GB each \\

CPU
& Intel Xeon w7-2495X \\

System memory
& 125\,GiB \\

Operating system
& Ubuntu 22.04.5 LTS \\

Python
& 3.13.7 \\

PyTorch
& 2.9.1+\texttt{cu128} \\

PyTorch CUDA
& 12.8 \\

NVIDIA driver
& 575.51.03 \\

Driver CUDA
& 12.9 \\

cuDNN
& 9.10.2 \\
\bottomrule
\end{tabular}
\end{table}

IS and FID are evaluated using
\path{pytorch-gan-metrics} v0.5.4 and
\path{pytorch-image-generation-metrics} v0.6.1.

Each individual training run uses a single GPU. The two available GPUs are
used to execute independent runs in parallel.

\end{document}